\documentclass[conference]{IEEEtran}
\IEEEoverridecommandlockouts
% Some Computer Society conferences also require the compsoc mode option,
% but others use the standard conference format.
%
% If IEEEtran.cls has not been installed into the LaTeX system files,
% manually specify the path to it like:
% \documentclass[conference]{../sty/IEEEtran}

% \usepackage[title]{appendix}
\usepackage{amsmath}
\usepackage{amsthm}
\usepackage{amssymb}
\usepackage{wasysym}
\usepackage{fdsymbol}
\usepackage{bbm}
\usepackage{verbatim}
\usepackage{graphicx}
\usepackage{afterpage}
\usepackage{etoolbox}
\usepackage{float}
\usepackage{algorithm}
\usepackage[noend]{algpseudocode}
\usepackage{multirow}
\usepackage{rotating}
\usepackage[inline]{enumitem}
\usepackage{xcolor}
\usepackage{cite}

% IEEEtran contains the IEEEeqnarray family of commands that can be used to
% generate multiline equations as well as matrices, tables, etc., of high
% quality.
\usepackage[caption=false,font=footnotesize]{subfig}

% *** FLOAT PACKAGES ***
%
%\usepackage{fixltx2e}
% fixltx2e, the successor to the earlier fix2col.sty, was written by
% Frank Mittelbach and David Carlisle. This package corrects a few problems
% in the LaTeX2e kernel, the most notable of which is that in current
% LaTeX2e releases, the ordering of single and double column floats is not
% guaranteed to be preserved. Thus, an unpatched LaTeX2e can allow a
% single column figure to be placed prior to an earlier double column
% figure.
% Be aware that LaTeX2e kernels dated 2015 and later have fixltx2e.sty's
% corrections already built into the system in which case a warning will
% be issued if an attempt is made to load fixltx2e.sty as it is no longer
% needed.
% The latest version and documentation can be found at:
% http://www.ctan.org/pkg/fixltx2e

\usepackage{stfloats}
% stfloats.sty was written by Sigitas Tolusis. This package gives LaTeX2e
% the ability to do double column floats at the bottom of the page as well
% as the top. (e.g., "\begin{figure*}[!b]" is not normally possible in
% LaTeX2e). It also provides a command:
%\fnbelowfloat
% to enable the placement of footnotes below bottom floats (the standard
% LaTeX2e kernel puts them above bottom floats). This is an invasive package
% which rewrites many portions of the LaTeX2e float routines. It may not work
% with other packages that modify the LaTeX2e float routines. The latest
% version and documentation can be obtained at:
% http://www.ctan.org/pkg/stfloats
% Do not use the stfloats baselinefloat ability as the IEEE does not allow
% \baselineskip to stretch. Authors submitting work to the IEEE should note
% that the IEEE rarely uses double column equations and that authors should try
% to avoid such use. Do not be tempted to use the cuted.sty or midfloat.sty
% packages (also by Sigitas Tolusis) as the IEEE does not format its papers in
% such ways.
% Do not attempt to use stfloats with fixltx2e as they are incompatible.
% Instead, use Morten Hogholm'a dblfloatfix which combines the features
% of both fixltx2e and stfloats:
%
% \usepackage{dblfloatfix}
% The latest version can be found at:
% http://www.ctan.org/pkg/dblfloatfix

% *** PDF, URL AND HYPERLINK PACKAGES ***
%
\usepackage{url}
\DeclareRobustCommand*{\IEEEauthorrefmark}[1]{%
  \raisebox{0pt}[0pt][0pt]{\textsuperscript{\footnotesize #1}}%
}

\numberwithin{equation}{section}
\numberwithin{figure}{section}
\numberwithin{table}{section}

\makeatletter
\renewcommand{\p@subfigure}{\thefigure}
\makeatother

\newtheorem{definition}{Definition}%[section]
\newtheorem{theorem}{Theorem}%[section]
\newtheorem{proposition}[theorem]{Proposition}
\newtheorem{lemma}[theorem]{Lemma}

\newcommand{\repeatable}[2]{\makeatletter \global\expandafter\def\csname repText@#1\endcsname {#2} \makeatother #2}
\newcommand{\repeatxt}[1]{\makeatletter \expandafter\csname repText@#1\endcsname \makeatother}

\newtoggle{citeInAbstract}
\toggletrue{citeInAbstract}

\newcommand{\usecrop}[2]
{
	\newlength{\cropwidth}
	\setlength{\cropwidth}{\the\textwidth}
	\addtolength{\cropwidth}{#1}
	\newlength{\cropheight}
	\setlength{\cropheight}{\the\textheight}
	\addtolength{\cropheight}{#2}
	\usepackage[width=\the\cropwidth,height=\the\cropheight,center]{crop}
}

\DeclareMathAlphabet{\mathpzc}{OT1}{pzc}{m}{it}

\DeclareMathOperator*{\argmax}{\arg\,\max}
\DeclareMathOperator*{\argmin}{\arg\,\min}

\DeclareMathOperator{\vol}{vol}

\newcommand{\abs}[1]{\left | #1 \right |}

\newcommand{\norm}[1]{\left \| #1 \right \|}

\usepackage{hyperref}
\newcommand{\fakefigure}[1]% #1 = label name
{\refstepcounter{figure}\label{#1}}
\newcommand{\fakeequation}[1]% #1 = label name
{\refstepcounter{equation}\label{#1}}
\newcommand{\fakealgorithm}[1]% #1 = label name
{\refstepcounter{algorithm}\label{#1}}
\newcommand{\fakeappendix}[1]% #1 = label name
{\refstepcounter{section}\label{#1}}
\newcommand{\fakesection}[1]% #1 = label name
{\refstepcounter{section}\label{#1}}
\newcommand{\fakesubsection}[1]% #1 = label name
{\refstepcounter{subsection}\label{#1}}
\newcommand{\fakesubfig}[1]% #1 = label name
{\refstepcounter{subfigure}\label{#1}}

%JAY: contextual proof citing based on the tex document.  This document will cite the appendix.
\newcommand{\citeproofsp}[1]
{(proof of this proposition is found in Appendix~\ref{#1})}

\newcommand{\citeproofst}[1]
{Proof of this proposition can be seen in Appendix~\ref{#1}}

\begin{document}

\title{Compressed Diffusion}

\author{\IEEEauthorblockN{Scott Gigante\,\IEEEauthorrefmark{1}\,$^\dag$,
Jay S. Stanley III\,\IEEEauthorrefmark{1}\,$^\dag$,\thanks{$^\dag$ These authors contributed equally. $^{\dag\dag}$ These authors contributed equally.}
Ngan Vu\IEEEauthorrefmark{2}, 
David van Dijk\IEEEauthorrefmark{3}, \\
Kevin R. Moon\IEEEauthorrefmark{4},
Guy Wolf\,\IEEEauthorrefmark{5}\,$^{\dag\dag}$\,$^{\S\S}$, and
Smita Krishnaswamy\,\IEEEauthorrefmark{2}\,\IEEEauthorrefmark{3}\,$^{\dag\dag}$ $^{\S}$\, \thanks{$^\S$ Corresponding author. \texttt{smita.krishnaswamy@yale.edu} \newline \indent\indent~ 333 Cedar St. New Haven CT 06510 USA}
\thanks{$^{\S\S}$ Corresponding author. \texttt{guy.wolf@umontreal.ca} \newline \indent\indent~ Pav.\ Andr\'{e}-Aisenstadt, 2920 ch.d.l. Tour, Mont\'{e}al, QC H3T 1J4, Canada}
} 
\IEEEauthorblockA{\IEEEauthorrefmark{1} Computational Biology and Bioinformatics Program, \IEEEauthorrefmark{2} Department of Computer Science, \\
\IEEEauthorrefmark{3} Department of Genetics, Yale University, New Haven, CT, USA}
\IEEEauthorblockA{\IEEEauthorrefmark{4} Department of Mathematics and Statistics, Utah State University, Logan, UT, USA}
\IEEEauthorblockA{\IEEEauthorrefmark{5} Department of Mathematics and Statistics, Universit\'{e} de Montr\'{e}al, Mont\'{e}al, QC, Canada}
}

\date{}
\maketitle
% As a general rule, do not put math, special symbols or citations
% in the abstract
\begin{abstract}
Diffusion maps are a commonly used kernel-based method for manifold learning, which can reveal intrinsic structures in data and embed them in low dimensions. However, as with most kernel methods, its implementation requires a heavy computational load, reaching up to cubic complexity in the number of data points. This limits its usability in modern data analysis. Here, we present a new approach to computing the diffusion geometry, and related embeddings, from a compressed diffusion process between data regions rather than data points. Our construction is based on an adaptation of the previously proposed measure-based Gaussian correlation (MGC) kernel that robustly captures the local geometry around data points. We use this MGC kernel to efficiently compress diffusion relations from pointwise to data region resolution. Finally, a spectral embedding of the data regions provides coordinates that are used to interpolate and approximate the pointwise diffusion map embedding of data. We analyze theoretical connections between our construction and the original diffusion geometry of diffusion maps, and demonstrate the utility of our method in analyzing big datasets, where it outperforms competing approaches.
\end{abstract}

% no keywords

% For peer review papers, you can put extra information on the cover
% page as needed:
% \ifCLASSOPTIONpeerreview
% \begin{center} \bfseries EDICS Category: 3-BBND \end{center}
% \fi
%
% For peerreview papers, this IEEEtran command inserts a page break and
% creates the second title. It will be ignored for other modes.

\section{Introduction}
\label{sec:intro}

Manifold learning approaches are often used for modeling and uncovering intrinsic low dimensional structure in high dimensional data (e.g., in genomics~\cite{moon:review}). Diffusion maps~\cite{coifman:DM}, in particular, are a popular method that capture data manifolds with random walks that propagate through nonlinear pathways in the data. Transition probabilities of a Markovian diffusion process define an intrinsic diffusion distance metric that is amenable to low dimensional embedding. Indeed, by arranging transition probabilities in a row-stochastic diffusion operator, and taking its leading eigenvalues and eigenvectors, one can derive a small set of coordinates where diffusion distances are approximated as Euclidean distances, and intrinsic manifold structures are revealed.

While the embedding provided by diffusion maps is useful for data analysis, it is also challenging to apply to modern ``big data'' settings due to scalability issues. As is often the case with kernel methods, diffusion maps require the computation of pointwise transition probabilities, which is computationally quadratic in the size of the data, and an eigendecomposition which is cubic if all eigenvectors are computed. Here, we show that one can reduce the computational cost significantly by only considering transition probabilities between data regions, which can be efficiently computed from a compressed affinity kernel over a fixed-size partition of the data. Our construction is based on coarse-graining a measure-based kernel~\cite{bermanis:MGC-sampta,bermanis:MGC} with an inverse-density measure, which was recently introduced in~\cite{lindenbaum:SUGAR} for geometry-based data generation and uniform resampling of data manifolds. Here we extend the uses of this kernel to provide efficient implementation and application of diffusion maps by first embedding data regions, and then interpolating the pointwise embedding from their diffusion coordinates.

The main contributions of this work are as follows. On the theoretical side, we further establish the relations between the original diffusion framework from~\cite{coifman:DM} and the construction in~\cite{lindenbaum:SUGAR}, both at a pointwise and compressed (i.e., data-region) level. On the practical side, we suggest a novel partitioning method, the results of which indicate significant speedups in the computation of the diffusion embedding, which outperform other approaches, and enables the application of diffusion-based manifold learning well beyond the data sizes traditionally used with kernel methods.

\section{Problem setup}
\label{sec:setup}

\subsection{Preliminaries}
\label{sec:prelim}
Let $\mathcal{M} \subseteq \mathbbm{R}^m$ be a compact $d$ dimensional manifold immersed in the ambient space $\mathbbm{R}^m$, where $d \ll m$, which represents the intrinsic geometry of data sampled from it. For simplicity, the integration notation $\int \cdot \,dy$ in this paper will refer to the Lebesgue integral $\int_\mathcal{M} \cdot \, dy$ over the manifold, instead of the whole space $\mathbbm{R}^n$. Further, while (for simplicity) such integrals are written without a specific measure one can equivalently, w.l.o.g., replace $dx$ with an appropriate measure representing data sampling distribution over $\mathcal{M}$. Let $g(x,y) \triangleq \exp\left(-\Vert x - y \Vert^2 / \varepsilon\right)$, $x,y \in \mathcal{M}$, $\varepsilon > 0$, define the Gaussian kernel $Gf(x) = \int g(x,y) f(y) dy$ used in~\cite{coifman:DM} to capture local neighborhoods from data sampled from $\mathcal{M}$. Following~\cite{coifman:DM} and related work, we define the Gaussian degree $q(x) = \norm{g(x,\cdot)}_1 = \int g(x,y) dy$ and assume it provides a suitable approximation of the distribution (or local density) of data over the manifold $\mathcal{M}$. Finally, given a measure $\mu$ over the manifold, an MGC kernel~\cite{bermanis:MGC-sampta,bermanis:MGC} is defined as $k_{\mu}(x,y) = \int g(x,r) g(y,r) d\mu(r)$. Note that while we use a Gaussian kernel for the remainder of this work, the definitions and theorems to follow do not depend on the choice of $g$, so long as it is a kernel function.

The original MGC construction in~\cite{bermanis:MGC} considered measures that represent data distribution, and used the constructed MGC kernel to define diffusion maps (see Sec.~\ref{sec:DM} and~\cite{coifman:DM}) with them. Recently, it was shown in~\cite{lindenbaum:SUGAR} that an MGC kernel constructed with inverse-density measure allows separation of data geometry and distribution, which in turn allows uniform data generation over the data manifold, with applications in alleviating sampling biases in data analysis (e.g., imbalanced classification). Here, we further explore the inverse-density MGC kernel and its properties. In particular, we show this kernel enables compression of its resulting diffusion geometry into data regions, instead of data points, to efficiently capture the intrinsic manifold geometry of analyzed data.

\subsection{Diffusion Maps}
\label{sec:DM}
Diffusion maps~\cite{coifman:DM} utilize a set of local affinities to define a Markovian diffusion process over analyzed data, which captures the intrinsic data geometry via diffusion distances. The original construction~\cite{coifman:DM} defines transition probabilities between data points based on Gaussian affinities as $p(x.y) = g(x,y) / q(x)$, where it can be verified that $\int p(x,y) dy = 1$. This construction can also be generalized to other affinity kernels, such as the MGC kernel $k_{\mu}(x,y)$ from~\cite{bermanis:MGC} or variation in~\cite{lindenbaum:SUGAR}. Under mild conditions on the affinity kernel, the resulting transition probability operator has a discrete decaying spectrum of eigenvalues $1 = \lambda_0 \geq \abs{\lambda_1} \geq \abs{\lambda_2} \geq \ldots$ which are used together with their corresponding eigenfunctions $\phi_0, \phi_1, \phi_2, \ldots$ (with $\phi_0$ being constant) to achieve the diffusion map of the data. Each data point $x \in \mathcal{M}$ is embedded by this diffusion map to the diffusion coordinates $\Phi_t(x) = (\lambda_1^t \phi_1(x), \ldots, \lambda^t_\delta(x) \phi_\delta(x))$, where the exact value of $\delta$ depends on the spectrum of the transition probabilities operator $P f(x) = \int p(x,y) f(y) dy$, whose kernel is $p(x,y)$.

\section{Inverse-density MGC kernel}
\label{sec:IDMGC}

Our construction here is based on an inverse-density MGC kernel, first introduced in~\cite{lindenbaum:SUGAR}, which is defined as follows:
\begin{definition}
\label{def:IDMGC}
The inverse-density MGC (ID-MGC) kernel is defined as $k(x,y) = \int \frac{g(x,r) g(r,y)}{q(r)} dr$, $x,y \in \mathcal{M}$, with the associated integral operator $K f(x) = \int k(x,y) f(y) dy$.
\end{definition}
This kernel corresponds to $k_\mu(x,y)$ with $d\mu(x) = q^{-1}(x)dx$, where in practice $q(x)$ accounts (up to normalization) for data density over $\mathcal{M}$. Therefore, the connectivity captured by this kernel normalizes density variations by enhancing relations in sparse regions compared to dense ones. Indeed, in~\cite{lindenbaum:SUGAR} this property was used to adjust the distribution of data and provide uniform resampling of data manifolds. We note that since we consider $\mathcal{M}$ as representing the intrinsic geometry of collected (or observed) data, we expect a certain amount of data points to exist in each local region, and thus $q^{-1}(x)$ can be expected to have a finite upper bound over $\mathcal{M}$.

Several useful properties of the ID-MGC kernel and its relation to the Gaussian-based diffusion operator in Sec.~\ref{sec:DM} are summarized in the following theorem \citeproofsp{apx:proof-IDMGC}:
\begin{theorem}
\label{thm:IDMGC-props}
Let the kernel $k(\cdot,\cdot)$ and operator $K$ be defined as in Def.~\ref{def:IDMGC}. Then, this kernel (and operator) can be related to the Gaussian kernel $G$ and diffusion operator $P$ via \begin{enumerate*} \item the operator norm: $\|K\| \leq \|G\|$, \item the kernel degrees: $\|k(x,\cdot)\|_1 = \|g(x,\cdot)\|_1$, $x \in \mathcal{M}$, and \item two-step diffusion: $P^2 f(x) = \int \frac{k(x,y)}{\|k(x,\cdot)\|_1} f(y) dy$. \end{enumerate*}
\end{theorem}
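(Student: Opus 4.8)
The plan is to establish the three claims in the order (2), (3), (1): the degree identity (2) feeds directly into the two-step identity (3), while the operator-norm bound (1) is the only part that needs spectral input and is therefore the main obstacle.

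For (2) I would unfold Definition~\ref{def:IDMGC} and apply Tonelli's theorem (the integrand is nonnegative), writing
\[
\norm{k(x,\cdot)}_1=\int\!\!\int\frac{g(x,r)\,g(r,y)}{q(r)}\,dr\,dy=\int\frac{g(x,r)}{q(r)}\Big(\int g(r,y)\,dy\Big)dr .
\]
Since $g$ is symmetric, the inner integral equals $q(r)$, which cancels the denominator and leaves $\int g(x,r)\,dr=q(x)=\norm{g(x,\cdot)}_1$. For (3) I would compute the kernel of $P^2$ directly: its value at $(x,y)$ is $\int p(x,r)p(r,y)\,dr=\tfrac{1}{q(x)}\int\frac{g(x,r)g(r,y)}{q(r)}\,dr=k(x,y)/q(x)$, and then substitute $q(x)=\norm{k(x,\cdot)}_1$ from part (2) to recover the stated expression.

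The heart of the proof is (1). I would first record that $G$ and $K$ are bounded, self-adjoint, positive semidefinite (PSD) operators on $L^2(\mathcal M)$: $g$ is a Gaussian reproducing kernel, and $k(x,y)=\langle q^{-1/2}g(x,\cdot),\,q^{-1/2}g(y,\cdot)\rangle$ is manifestly a Gram kernel, hence PSD; compactness is automatic since $\mathcal M$ is compact and both kernels are bounded. Writing $M_\phi$ for multiplication by a function $\phi$ (well defined and bounded here, using that $q^{-1}$ has a finite upper bound over $\mathcal M$), Definition~\ref{def:IDMGC} gives the factorization $K=G\,M_{1/q}\,G$. The goal is then $K\preceq G$ in the Loewner order, since for PSD operators this yields $\norm{K}=\lambda_{\max}(K)\le\lambda_{\max}(G)=\norm{G}$.

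The key lemma I would prove elementarily is $G\preceq M_q$: for any $f$,
\[
\langle Gf,f\rangle=\int\!\!\int g(x,y)f(x)f(y)\,dx\,dy\le\int\!\!\int g(x,y)\,\tfrac12\big(f(x)^2+f(y)^2\big)\,dx\,dy=\int q(x)f(x)^2\,dx,
\]
using $g\ge 0$, its symmetry, and $ab\le\tfrac12(a^2+b^2)$. This is equivalent to $\widetilde G:=M_{1/\sqrt q}\,G\,M_{1/\sqrt q}\preceq I$ (substitute $f=M_{\sqrt q}w$). Setting $A:=M_{1/\sqrt q}G^{1/2}$, one has $\widetilde G=AA^{*}$ and $C:=A^{*}A=G^{1/2}M_{1/q}G^{1/2}$, so $\norm{C}=\norm{\widetilde G}\le 1$ and hence $C\preceq I$. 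Conjugating by $G^{1/2}$ (which exists by the functional calculus for the compact PSD operator $G$) gives $K=G^{1/2}CG^{1/2}\preceq G^{1/2}G^{1/2}=G$, and the bound $\norm{K}\le\norm{G}$ follows. I expect the only delicate points to be the justification of the square root $G^{1/2}$ and the self-adjoint/PSD bookkeeping; the inequality $G\preceq M_q$ above lets me bypass any appeal to the spectral radius of $P$, so the chain $G\preceq M_q\Rightarrow\widetilde G\preceq I\Rightarrow C\preceq I\Rightarrow K\preceq G$ closes the argument cleanly.
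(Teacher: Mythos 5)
Your proofs of the degree identity and the two-step diffusion identity are essentially the paper's own: Fubini--Tonelli plus the symmetry of $g$ for part (2), and a direct computation of the kernel of $P^2$ followed by substituting $q(x)=\norm{k(x,\cdot)}_1$ for part (3). For the operator-norm bound (1), however, you take a genuinely different route. The paper introduces the multiplication operator $Qf(x)=q(x)f(x)$, observes $G=QP$ and (from parts (2)--(3)) $K=QP^2=GP$, and concludes $\norm{K}\le\norm{G}\,\norm{P}=\norm{G}$ from row-stochasticity of $P$ --- a three-line argument, but one that silently relies on $\norm{P}=1$, which is immediate only for the sup/row-sum norm; in $L^2$, where $P$ is not self-adjoint, row-stochasticity controls the spectral radius rather than the norm. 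Your argument instead factors $K=GM_{1/q}G$, proves the elementary Loewner inequality $G\preceq M_q$ via $ab\le\tfrac12(a^2+b^2)$, and conjugates by $G^{1/2}$ to get $K=G^{1/2}CG^{1/2}\preceq G$ with $C=G^{1/2}M_{1/q}G^{1/2}$, $\norm{C}=\norm{M_{1/\sqrt q}GM_{1/\sqrt q}}\le 1$. This is correct (the boundedness of $M_{1/q}$ is covered by the paper's standing assumption that $q^{-1}$ is bounded on $\mathcal{M}$), it works squarely in $L^2$, it yields the strictly stronger conclusion $K\preceq G$ in the positive-semidefinite order rather than just a norm bound, and it bypasses $P$ entirely --- at the cost of invoking $G^{1/2}$ and the identity $\norm{A^*A}=\norm{AA^*}$. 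One cosmetic slip: the substitution showing $G\preceq M_q\Rightarrow M_{1/\sqrt q}GM_{1/\sqrt q}\preceq I$ should read $w=M_{\sqrt q}f$ (equivalently $f=M_{1/\sqrt q}w$), not $f=M_{\sqrt q}w$; the surrounding computation makes clear you mean the right thing.
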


\section{Compressed kernel and diffusion transitions}
\label{sec:compressed}

Let $S_1,\ldots,S_n \subset \mathcal{M}$ be a partition of the manifold into measurable subsets, such that $\mathcal{M} = \bigcup_{j=1}^n S_j$, and $S_i \cap S_j = \emptyset$ for every $i \neq j$. We define a compressed kernel over such a partition as follows:
\begin{definition}
\label{def:compressed-kernel}
Let $k(x,y)$, $x,y \in \mathcal{M}$ be defined as in Def.~\ref{def:IDMGC}. The compressed kernel over partition $\mathcal{S} = \{S_j\}_{j=1}^n$ of $\mathcal{M}$ is given by 
$$
k_\mathcal{S}(S,T) = \int_S \int_T k(x,y) dxdy, \quad S,T \in \mathcal{S},
$$
with the corresponding $n \times n$ kernel matrix given by
$$
[K_\mathcal{S}]_{ij} = k(S_i,S_j), \quad i,j = 1,\ldots,n .
$$
\end{definition}

Similar to the original diffusion map construction in Sec.~\ref{sec:DM}, we normalize the compressed kernel to get diffusion probabilities $p_\mathcal{S}(S,T) = k(S,T)/\|k(S,\cdot)\|_1$, organized in an $n \times n$ row-stochastic matrix $P_{\mathcal{S}}$. This matrix captures diffusion transition probabilities between data regions. The relation between the compressed construction and the original diffusion framework is summarized in the following theorem \citeproofsp{apx:proof-compressed}:
\begin{theorem}
\label{thm:compressed-props}
The compressed construction here can be related to the pointwise one in Sec.~\ref{sec:DM}, via \begin{enumerate*} \item operator norm (in matrix \& operator form): $\|K_{\mathcal{S}}\| \leq O(\|G\|)$ with a constant that only depends on the finite volume of $\mathcal{M}$ \item kernel degrees: $\|k_\mathcal{S}(S,\cdot)\|_1 = \int_S \| g(x,\cdot) \|_1 dx$, $S \in \mathcal{S}$, and \item diffusion probabilities: $P_\mathcal{S}(S,T) = \int \Pr[S \stackrel{1\text{ step}}{\longleadsto} r] \Pr[r \stackrel{1\text{ step}}{\longleadsto} T] \,dr$ where $\Pr[r \stackrel{1\text{ step}}{\longleadsto} T] = \int_T p(r,y) dy$ and $\Pr[S \stackrel{1\text{ step}}{\longleadsto} r] = $ $\int_S p(x,r) \Pr[x | S] dx$ with prior $\Pr[x | S] = \frac{\|g(x,\cdot)\|_1}{\|k_\mathcal{S}(S,\cdot)\|_1}$. \end{enumerate*} 
\end{theorem}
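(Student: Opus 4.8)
The plan is to prove the three claims in increasing order of difficulty, reusing the pointwise facts from Theorem~\ref{thm:IDMGC-props} throughout. For the \emph{kernel degrees} (claim 2) I would start from $k_\mathcal{S}(S,T)=\int_S\int_T k(x,y)\,dx\,dy$ and form the compressed degree by summing over the partition. Because the $\{S_j\}$ are disjoint and cover $\mathcal{M}$, the sum $\sum_{j}\int_{S_j}k(x,y)\,dy$ collapses to $\int_\mathcal{M}k(x,y)\,dy=\norm{k(x,\cdot)}_1$, which gives $\norm{k_\mathcal{S}(S,\cdot)}_1=\int_S\norm{k(x,\cdot)}_1\,dx$. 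Invoking Theorem~\ref{thm:IDMGC-props}(2), namely $\norm{k(x,\cdot)}_1=\norm{g(x,\cdot)}_1$, finishes this part; as a byproduct it also confirms that the prior $\Pr[x\mid S]$ integrates to one over $S$, so it is a genuine conditional distribution.

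For the \emph{diffusion probabilities} (claim 3) I would expand the numerator of $P_\mathcal{S}(S,T)=k_\mathcal{S}(S,T)/\norm{k_\mathcal{S}(S,\cdot)}_1$ using Def.~\ref{def:IDMGC} and Fubini to factor it as $k_\mathcal{S}(S,T)=\int\frac{1}{q(r)}\big(\int_S g(x,r)\,dx\big)\big(\int_T g(r,y)\,dy\big)\,dr$. The key algebraic observation is that $q(x)=\norm{g(x,\cdot)}_1$ by definition, so in $\Pr[S\stackrel{1\text{ step}}{\longleadsto}r]=\int_S p(x,r)\Pr[x\mid S]\,dx$ the density factor $q(x)$ in $p(x,r)=g(x,r)/q(x)$ cancels exactly against the $\norm{g(x,\cdot)}_1$ in the prior, leaving $\frac{1}{\norm{k_\mathcal{S}(S,\cdot)}_1}\int_S g(x,r)\,dx$. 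Combining this with $\Pr[r\stackrel{1\text{ step}}{\longleadsto}T]=\int_T g(r,y)/q(r)\,dy$ and integrating the product over $r$ reproduces exactly the factored form of $k_\mathcal{S}(S,T)$ divided by $\norm{k_\mathcal{S}(S,\cdot)}_1$, i.e.\ $P_\mathcal{S}(S,T)$.

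The \emph{operator norm} bound (claim 1) is the main obstacle, since one must pass from the continuous operator $K$ to the finite matrix $K_\mathcal{S}$ without losing control of the spectrum. The idea is to realize $K_\mathcal{S}$ as a compression of $K$: define an embedding $E:\mathbbm{R}^n\to L^2(\mathcal{M})$ by $Ev=\sum_j v_j\mathbbm{1}_{S_j}$ and an averaging map $A:L^2(\mathcal{M})\to\mathbbm{R}^n$ by $(Af)_i=\int_{S_i}f$, and verify from the definitions that $K_\mathcal{S}=AKE$, so that $\norm{K_\mathcal{S}}\leq\norm{A}\,\norm{K}\,\norm{E}$. Bounding the two factors is then routine: disjointness gives $\norm{Ev}_{L^2}^2=\sum_j v_j^2\,\vol(S_j)$, hence $\norm{E}\leq(\max_j\vol(S_j))^{1/2}$, and Cauchy--Schwarz on each $S_i$ yields the same bound for $\norm{A}$. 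Combining, $\norm{K_\mathcal{S}}\leq(\max_j\vol(S_j))\,\norm{K}\leq\vol(\mathcal{M})\,\norm{K}$, and Theorem~\ref{thm:IDMGC-props}(1) ($\norm{K}\leq\norm{G}$) gives $\norm{K_\mathcal{S}}\leq\vol(\mathcal{M})\,\norm{G}$. I expect the only delicate point to be fixing the right pair of norms ($\ell^2$ on $\mathbbm{R}^n$ and $L^2$ on $\mathcal{M}$) so that $K_\mathcal{S}=AKE$ holds as an exact identity and the resulting constant genuinely depends only on $\vol(\mathcal{M})$.
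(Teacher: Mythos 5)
Your parts (2) and (3) are correct and follow essentially the same computations as the paper: the degree identity comes from summing the disjoint cover and invoking $\norm{k(x,\cdot)}_1=\norm{g(x,\cdot)}_1$, and the transition-probability factorization is exactly the Fubini rearrangement the paper performs (you run it from the factored form back to $k_\mathcal{S}(S,T)/\norm{k_\mathcal{S}(S,\cdot)}_1$ rather than forward, but the cancellation of $q(x)$ against the prior's $\norm{g(x,\cdot)}_1$ is the identical key step).

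Part (1) is where you genuinely diverge, and your route is both valid and sharper. The paper also embeds $v$ as the simple function $f_v=\sum_j [v]_j\chi_{S_j}$ and shows $[K_\mathcal{S}v]_i=\int_{S_i}Kf_v$, but it then bounds $\norm{K_\mathcal{S}v}_2^2\le\norm{Kf_v}_1^2\le[\vol\mathcal{M}]^2\norm{Kf_v}_2^2$ by passing through the $L^1$ norm and H\"older, and uses $\norm{f_v}_2\le[\vol\mathcal{M}]^{1/2}\norm{v}_2$, landing on the constant $[\vol\mathcal{M}]^{3/2}$. You instead recognize $K_\mathcal{S}=AKE$ with $A=E^*$ (a genuine compression of $K$) and apply Cauchy--Schwarz partition-by-partition to get $\norm{E},\norm{A}\le(\max_j\vol S_j)^{1/2}$, yielding $\norm{K_\mathcal{S}}\le(\max_j\vol S_j)\norm{K}\le\vol(\mathcal{M})\norm{G}$. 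This buys a smaller constant --- indeed one controlled by the largest partition volume rather than the total volume raised to the $3/2$ --- and avoids the paper's slightly loose-looking step $\sum_i\bigl|\int_{S_i}Kf\bigr|^2\le\bigl(\sum_i\bigl|\int_{S_i}Kf\bigr|\bigr)^2\le\norm{Kf}_1^2$. Both arguments satisfy the theorem's claim that the constant depends only on $\vol(\mathcal{M})$; your factorization is the cleaner and more reusable formulation.
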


\section{Compression-based fast diffusion maps} 
\label{sec:fast-DM}

The compressed construction of $K_\mathcal{S}$ and $P_\mathcal{S}$ gives rise to a natural approximation of diffusion maps. Given a partitioning $\mathcal{S}$, we define a compressed diffusion map $\Phi^t_\mathcal{S} : \mathcal{S} \to \mathbbm{R}^\delta$ analogous to the pointwise one in~\ref{sec:DM} using eigenvectors of $P_\mathcal{S}$ (with corresponding eigenvalues) in lieu of the eigenfunctions of $P$. This diffusion map provides an embedding of data regions, rather than points, which represents a coarse version of the diffusion geometry over $\mathcal{M}$. Then, using the region-to-point transition probabilities $Pr[S \stackrel{1 step}{\longleadsto} x]$ from Theorem~\ref{thm:compressed-props}, we approximate the pointwise diffusion map as $\widetilde{\Phi}^t(x) = \sum_{j=1}^n \Phi^t_\mathcal{S}(S_j) Pr[S_j \stackrel{1 step}{\longleadsto} x]$. Results in Sec.~\ref{sec:results} indicate that with the proper choice of partitions in $\mathcal{S}$, the compressed diffusion map $\widetilde{\Phi}^t$ provides a good approximation of the original pointwise one from~\cite{coifman:DM}, while also providing significant scalability and performance advantages over both direct computation and other alternative approximations.
% (S_i) = ({\lambda_\mathcal{S}}^t_1 \phi_\mathcal{S}_1(S_i), \ldots, {\lambda_\mathcal{S}}^t_k \phi_\mathcal{S}_k(S_i))$
% retains many properties of the diffusion map of $G$, where $\lambda_\mathcal{S}$ and $\phi_\mathcal{S}$ are the eigenvalues and eigenfunctions of $P_\mathcal{S}$. 

It now remains to derive a strategy for efficiently partitioning the data manifold such that compression over $\mathcal{S}$ will suitably capture (albeit at a coarser level) the pointwise diffusion geometry defined by $P$. To this end, it is convenient to consider diffusion affinities and distances~\cite{coifman:DM,bermanis:MGC-sampta,bermanis:MGC}, rather than transition probabilities. These are defined as follows:
\begin{definition}
\label{def:diff-affinities}
The diffusion affinity~\cite{coifman:DM} kernel is the symmetric conjugate of $P$, which is defined spectrally as $A f(x) = \sum_{j=1}^\infty \lambda_j \langle \psi_j , f \rangle \psi_j(x)$ with $\psi_j = q^{1/2}(x) \phi_j(x)$.
\end{definition}

\begin{definition}
\label{def:diffusion_distance}
The diffusion distance \cite{coifman:DM} between two points $x$ and $y$ sampled from $\mathcal{M}$ is defined via the kernel function $a^t(\cdot,\cdot)$ of $A^t$, and equivalently via its eigendecomposition, as
\begin{align*}
[D^t(x,y)]^2 = D^t_2(x,y) &= \left\|a^t(x,\cdot) - a^t(y,\cdot)\right\|_2^2 \\
&= \sum_{j=0}^{\infty} \lambda_j^{2t}\left[\psi_j(x)-\psi_j(y)\right]^2
\end{align*}
\end{definition}

\begin{figure}
    \centering
    \includegraphics[width=\linewidth]{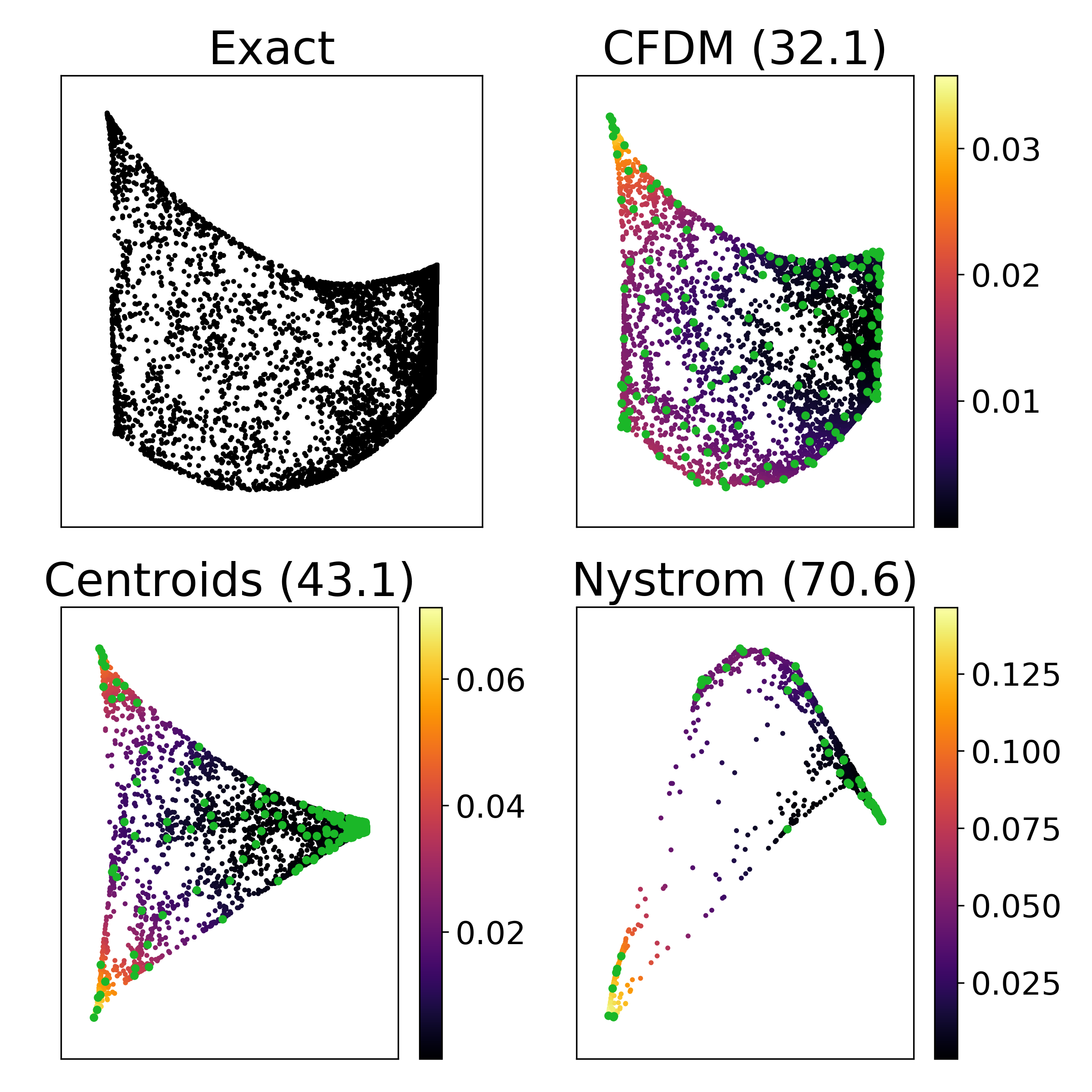}
    \caption{Example of approximations of diffusion maps on the Swiss Roll. Landmarks or centroids are shown in green. Mean squared error (MSE) of each approximation is shown in parentheses. Color shows MSE for each point.}
    \label{fig:swissroll_example}
    \vspace{-1em}
\end{figure}

We note that in the described compressed diffusion scheme, we replace pointwise affinities with affinities between data points and partitions, rather than just between partitions. Under proper partitioning of the data, we expect (or assume) such substitution would retain intrinsic structure of the data, as captured by the diffusion geometry defined by $P$ or $A$. The following proposition examines the difference between pointwise and coarse-grained diffusion affinity information, and relates this difference to locality of partitions with respect to the diffusion distance metric.
\begin{proposition}
\label{prop:partition-locality}
Let $S \in \mathcal{S}$ and $\xi > 0$. If the diffusion affinities satisfy 
$\sup\limits_{z \in \mathcal{M},x \in S_j}\left|a(x,z) - \mathbb{E}_{y \in S_j}[a(y,z)]\right| < \xi$ , then the diffusion distances $D^1(x,y)$, $x,y \in S$, are bounded from above by~$O(\xi)$, with a constant that only depends on $\vol{\mathcal{M}}$.
\end{proposition}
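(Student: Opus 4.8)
The plan is to work directly from the kernel (rather than spectral) form of the diffusion distance in Definition~\ref{def:diffusion_distance}, since it is the kernel form that exposes exactly the quantity the hypothesis controls. Specializing to $t = 1$ (so that $a^1 = a$) and writing the $L^2$ norm as an explicit integral over the manifold gives
$$[D^1(x,y)]^2 = \norm{a(x,\cdot) - a(y,\cdot)}_2^2 = \int \left[a(x,z) - a(y,z)\right]^2 dz ,$$
so the whole claim reduces to controlling $\abs{a(x,z) - a(y,z)}$ uniformly in $z$ for $x,y \in S$, and then integrating against the finite volume of $\mathcal{M}$.

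The key step I would take is to route the difference $a(x,z) - a(y,z)$ through the per-partition mean $m(z) \triangleq \mathbb{E}_{w \in S_j}[a(w,z)]$ that appears in the hypothesis. For any $x,y \in S = S_j$ and any $z \in \mathcal{M}$, the triangle inequality yields
$$\abs{a(x,z) - a(y,z)} \leq \abs{a(x,z) - m(z)} + \abs{m(z) - a(y,z)} .$$
Because both $x$ and $y$ lie in $S_j$, the assumed uniform bound makes each summand strictly less than $\xi$, so $\abs{a(x,z) - a(y,z)} < 2\xi$ for every $z$. Note this uses nothing about the particular averaging measure behind $\mathbb{E}_{w \in S_j}$ beyond the fact that it furnishes a single reference value shared by all points of $S_j$.

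Substituting this uniform pointwise bound into the integral and invoking the finite volume of $\mathcal{M}$ then gives
$$[D^1(x,y)]^2 < \int (2\xi)^2 dz = 4\xi^2 \vol{\mathcal{M}} ,$$
so that $D^1(x,y) < 2\sqrt{\vol{\mathcal{M}}}\,\xi = O(\xi)$, with a constant $2\sqrt{\vol{\mathcal{M}}}$ depending only on the volume of $\mathcal{M}$, exactly as asserted.

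I do not expect a genuine obstacle: the argument is an elementary triangle-inequality-plus-integration estimate. The only points deserving care are (i) using the kernel-form $L^2$ integral of the diffusion distance rather than its spectral series, since it is the former that directly surfaces the term $a(x,z)-a(y,z)$ that the hypothesis bounds, and (ii) making sure the same reference mean $m(z)$ simultaneously controls \emph{both} endpoints $x$ and $y$, which is what converts a single $\xi$ into the factor $2\xi$ and ultimately into the $O(\xi)$ bound.
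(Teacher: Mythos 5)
Your proof is correct and follows essentially the same route as the paper's: the triangle inequality through the partition mean $\mathbb{E}_{u\in S_j}[a(u,z)]$ gives the uniform bound $\absinline{a(x,z)-a(y,z)}\leq 2\xi$, which is then squared and integrated over the finite volume of $\mathcal{M}$ to obtain $D^1(x,y)\leq 2\sqrt{\vol\mathcal{M}}\,\xi$. Your writeup is in fact slightly more careful than the paper's, which denotes the bound by $\varepsilon$ (clashing with the kernel bandwidth) where $\xi$ is meant.
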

\begin{proof}
For any $x,y \in S_j$ and $z \in \mathcal{M}$, due to the triangle inequality, we have $|a(x,z) - a(y,z)| \leq |a(x,z) - \mathbb{E}_{u \in S_j}[a(u,z)]| + |a(y,z) - \mathbb{E}_{u \in S_j}[a(u,z)]| \leq 2\varepsilon$. Therefore, $\int |a(x,z) - a(y,z)|^2 dz \leq 4 \varepsilon^2 \vol{\mathcal{M}}$, which together with Definition~\ref{def:diffusion_distance} yields the result in the proposition.
\end{proof}
\noindent Indeed, this result indicates that to faithfully capture the diffusion geometry by given partitioning $\mathcal{S}$, the partitions should be local in the diffusion geometry, since the diffusion geometry in turn preserves the local geometry of the manifold~\cite{coifman:DM}. This understanding motivates our formulation of a partitioning strategy, as explained below.

Here we turn to practical settings of data analysis applications. Therefore, in the following we consider finite settings, particularly for some dataset $X \subset \mathcal{M}$, $|X| < \infty$, sampled from the manifold, with $P$ and $A$ being $|X| \times |X|$ matrices constructed from it. Note also that in the this setting, the eigenfunctions of $P$ or $A$ become eigenvectors (i.e., in $\mathbbm{R}^{|X|}$), and the RHS in Definition~\ref{def:diffusion_distance} is written analogously with the sum going over $|X|$ eigenpairs. 

In order to define a partition on $X$, we first choose a set of partition centroids $L$. Noting the success of coherence-based sampling strategies for signal compression and recovery\cite{puy2018random}, we propose a random sampling without replacement biased by coherence, where we estimate the coherence as follows.

\begin{proposition}
\label{prop:estimator}
Let $\mu_\ell(x) = \sum_{j=0}^{\ell} |\psi_j(x)|^2$ be the order-$\ell$ coherence of $x \in X$ as in \cite{puy2018random}. Then, the \textit{t}-step diffusion coherence $\rho_t(x) = \|a^t(x,\cdot)\|_2^2$ is an estimate of $\mu_\ell(x)$ with error 
$$
\left\|\mu_\ell(x) - \rho_t(x)\right\|_2^2 \propto \left| \ell - \sum_{j=0}^{|X|-1} \lambda_j^{2t}\right|
$$for every $x \in X$.
\end{proposition}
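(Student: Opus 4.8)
The plan is to pass entirely to the spectral side, where both quantities become linear combinations of $|\psi_j(x)|^2$ and the comparison becomes transparent. First I would rewrite $\rho_t$ using the spectral form of the affinity operator from Definition~\ref{def:diff-affinities}: since $A$ is self-adjoint with orthonormal eigenfunctions $\{\psi_j\}$, its $t$-th power $A^t$ has kernel $a^t(x,y)=\sum_{j}\lambda_j^{t}\psi_j(x)\psi_j(y)$. Substituting this into $\rho_t(x)=\|a^t(x,\cdot)\|_2^2$ and using orthonormality $\langle\psi_j,\psi_k\rangle=\delta_{jk}$ collapses the resulting double sum to $\rho_t(x)=\sum_{j=0}^{|X|-1}\lambda_j^{2t}|\psi_j(x)|^2$. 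This already exhibits $\rho_t$ as a \emph{soft-thresholded} version of the coherence, with $\lambda_j^{2t}\in[0,1]$ playing the role of the sharp cutoff $\mathbbm{1}[j\le\ell]$ that defines $\mu_\ell$.

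Next I would form the pointwise discrepancy directly, writing $\mu_\ell(x)-\rho_t(x)=\sum_{j=0}^{|X|-1}w_j\,|\psi_j(x)|^2$ with weights $w_j=\mathbbm{1}[j\le\ell]-\lambda_j^{2t}$. The key structural observation is that these weights have a definite sign: because the spectrum satisfies $1=\lambda_0\ge|\lambda_1|\ge\cdots$, we have $\lambda_j^{2t}\in[0,1]$, so $w_j\ge 0$ for $j\le\ell$ and $w_j\le 0$ for $j>\ell$. Hence the discrepancy splits into a nonnegative contribution $\sum_{j\le\ell}(1-\lambda_j^{2t})|\psi_j(x)|^2$ from eigenmodes that should be retained but are damped, minus a nonnegative contribution $\sum_{j>\ell}\lambda_j^{2t}|\psi_j(x)|^2$ from eigenmodes that should be discarded but leak through. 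Note that the constant mode drops out, since $\lambda_0^{2t}=1$ forces $w_0=0$.

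Finally I would aggregate over the sample. Summing over $x\in X$ and invoking orthonormality of the unit eigenvectors, $\sum_{x}|\psi_j(x)|^2=1$, every weight $|\psi_j(x)|^2$ sums to one, so
\begin{equation*}
\sum_{x\in X}\bigl(\mu_\ell(x)-\rho_t(x)\bigr)=(\ell+1)-\sum_{j=0}^{|X|-1}\lambda_j^{2t}=\ell-\sum_{j=1}^{|X|-1}\lambda_j^{2t},
\end{equation*}
which is exactly the right-hand side of the claim up to the inclusive-index convention for $\mu_\ell$ and the recognition that $\sum_j\lambda_j^{2t}=\tr(A^{2t})$. The stated proportionality then follows: the total approximation error is governed by the gap $|\ell-\tr(A^{2t})|$ between the rank-$\ell$ projection trace and $\tr(A^{2t})$, and the per-point factors $|\psi_j(x)|^2$ only redistribute this fixed budget across $x$, giving the $\propto$ relation for each fixed $x$.

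The main obstacle I anticipate is reconciling the squared norm written on the left of the claim with the signed, linear quantity on the right: read literally as $\sum_x|\mu_\ell(x)-\rho_t(x)|^2$, the error need not equal $|\ell-\tr(A^{2t})|$ exactly, so the proportionality must be understood as a statement about the aggregate first-moment discrepancy. Here the sign structure of the $w_j$ established above is essential, since it prevents the retained-mode and leaked-mode contributions from cancelling and guarantees that their combined magnitude is controlled by the single scalar $|\ell-\tr(A^{2t})|$. Pinning down the proportionality constant — which reflects how the eigenvector mass $|\psi_j(x)|^2$ concentrates across $x$ — is the one place where I would need either the orthonormal-averaging identity above or an explicit control on $\max_x|\psi_j(x)|^2$.
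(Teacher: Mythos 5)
Your proof is correct and takes essentially the same route as the paper's: both expand $\rho_t(x)=\sum_{j}\lambda_j^{2t}|\psi_j(x)|^2$ via the spectral decomposition of $A$ and then aggregate over $x\in X$ using orthonormality of the eigenvectors, which is exactly the paper's trace computation $\left|\operatorname{trace}\left(\Psi(I_\ell-\Lambda^{2t})\Psi^T\right)\right|=\left|\ell-\sum_j\lambda_j^{2t}\right|$ written out in coordinates rather than matrices. The looseness you flag at the end --- that the left-hand side is written as a squared norm while the identity actually proved concerns the signed first-moment aggregate $\sum_x\bigl(\mu_\ell(x)-\rho_t(x)\bigr)$ --- is present in the paper's own proof as well, so it is an honest observation about the statement rather than a gap in your argument.
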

\noindent\citeproofst{apx:proof-estimator}.

According to Proposition~\ref{prop:partition-locality}, we should assign $x \in X$ to the partition associated with centroid $y \in L$ such that $D^t_2(x,y)$ is minimized. However, this distance is in practice biased by the diffusion coherence $\rho_t(y)$. We therefore define the \textit{angular diffusion distance}, which we will show maximizes transition probabilities between points and the assigned centroids.

\begin{definition}
\label{def:angulardiffusion}
The angular diffusion distance between $x,y \in \mathcal{M}$ is defined using the eigendecomposition of the diffusion affinity kernel from Def.~\ref{def:diff-affinities} as
$$
D^t_c(x,y) = \arccos\left(\frac{\sum_{j=0}^\infty \lambda_j^{2t}\psi_j(x)\psi_j(y)}{\sqrt{\rho_t(x)\,\rho_t(y)}} \right).
$$
\end{definition}

\begin{proposition}
\label{prop:distances}
For $x \in \mathcal{M}$, the following dualities hold:
\begin{enumerate}
\item $\displaystyle \argmin_{y\in \mathcal{M}} ~ D^t_c(x,y) =\argmax_{y\in \mathcal{M}} ~ \frac{a^{2t}(x,y)}{\sqrt{\rho_t(x)\,\rho_t(y)}}$ ;
\item $\displaystyle \argmin_{y\in \mathcal{M}} ~ \left\|\frac{a^t(x,\cdot)}{\sqrt{\rho_t(x)}} - \frac{a^t(y,\cdot)}{\sqrt{\rho_t(y)}}\right\|_2^2 =  \argmin_{y\in \mathcal{M}} ~ D_c^t(x,y)$ .
\end{enumerate}
\end{proposition}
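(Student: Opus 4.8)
The plan is to reduce both dualities to elementary geometry of unit vectors in the affinity space $L^2(\mathcal{M})$ (resp.\ $\mathbbm{R}^{|X|}$ in the finite setting), once the spectral bookkeeping relating $a^t$, $a^{2t}$, and $\rho_t$ is in place. Since $A$ is self-adjoint with orthonormal eigenfunctions $\{\psi_j\}$, Definition~\ref{def:diff-affinities} gives the kernel of $A^t$ as $a^t(x,y) = \sum_{j} \lambda_j^{t}\psi_j(x)\psi_j(y)$. Using orthonormality of the $\psi_j$ — equivalently, the semigroup property $A^t A^t = A^{2t}$ read entrywise with $A$ symmetric — I would first verify the two identities that drive the argument:
\[
\langle a^t(x,\cdot),\, a^t(y,\cdot)\rangle = \sum_j \lambda_j^{2t}\psi_j(x)\psi_j(y) = a^{2t}(x,y), \qquad \rho_t(x) = \norm{a^t(x,\cdot)}_2^2 = a^{2t}(x,x).
\]
Consequently the normalized feature vector $u_x \triangleq a^t(x,\cdot)/\sqrt{\rho_t(x)}$ satisfies $\norm{u_x}_2 = 1$, and
\[
\langle u_x,\, u_y\rangle = \frac{a^{2t}(x,y)}{\sqrt{\rho_t(x)\,\rho_t(y)}},
\]
which is exactly the argument of the $\arccos$ in Definition~\ref{def:angulardiffusion}; Cauchy--Schwarz forces it into $[-1,1]$, so $D^t_c$ is well defined.

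For the first duality I would simply invoke monotonicity: $\arccos$ is strictly decreasing on $[-1,1]$, so minimizing $D^t_c(x,y)$ over $y$ is equivalent to maximizing its cosine argument $\langle u_x, u_y\rangle = a^{2t}(x,y)/\sqrt{\rho_t(x)\,\rho_t(y)}$, which is precisely the stated right-hand side. The factor $\sqrt{\rho_t(x)}$ in the denominator is constant in $y$ and hence irrelevant to the $\argmax$, but retaining it costs nothing.

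For the second duality I would expand the squared distance between the two unit vectors,
\[
\norm{\frac{a^t(x,\cdot)}{\sqrt{\rho_t(x)}} - \frac{a^t(y,\cdot)}{\sqrt{\rho_t(y)}}}_2^2 = \norm{u_x}_2^2 + \norm{u_y}_2^2 - 2\langle u_x, u_y\rangle = 2 - 2\langle u_x, u_y\rangle,
\]
so the objective is an affine, strictly decreasing function of the cosine similarity $\langle u_x, u_y\rangle$. Hence its $\argmin$ over $y$ coincides with the $\argmax$ of that similarity, which by the first part equals the $\argmin$ of $D^t_c(x,y)$, giving the claimed identity.

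I do not anticipate a genuine obstacle; the only step requiring care is the kernel-composition identity $a^{2t}(x,y)=\langle a^t(x,\cdot),a^t(y,\cdot)\rangle$ together with the attendant normalization $\norm{u_x}_2=1$, since everything else is monotonicity of $\arccos$ and the unit-vector expansion. That identity is just the self-adjoint semigroup property of $A$ expressed through its orthonormal eigenbasis (in the finite case, $A^{2t}=A^t A^t$ read entrywise with $A$ symmetric), so the proof will be short.
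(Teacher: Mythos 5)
Your proposal is correct and follows essentially the same route as the paper's proof: the spectral identity $a^{2t}(x,y)=\langle a^t(x,\cdot),a^t(y,\cdot)\rangle$ with $\rho_t(x)=\norm{a^t(x,\cdot)}_2^2$, monotonicity of $\arccos$ for the first duality, and the expansion $2-2\langle u_x,u_y\rangle$ of the normalized squared distance for the second. The only (harmless) difference is that you justify the $\arccos$ argument lying in $[-1,1]$ via Cauchy--Schwarz, whereas the paper argues it lies in $[0,1]$ as an inner product of normalized non-negative functions; your version is if anything slightly more robust.
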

\citeproofst{apx:distances}. Propositions \ref{prop:estimator} and \ref{prop:distances} provide a convenient method to select a set of partitions that optimize proposition \ref{prop:partition-locality}.  To do this, we assign points $x \in X$ to partitions $S_i \in \mathcal{S}$ corresponding to centroids $y_i \in L$ according to
\begin{equation*}
\label{eqn:optimal_landmarks}
S_i = \{x \in X : D^t_c(x, y_i) = \argmin_{y \in L} D^t_c(x,y)\}.
\end{equation*}

% \begin{algorithm}
% \caption{Compression-based Fast Diffusion Maps}\label{alg:mgc_diffusion}
% \begin{algorithmic}[1]
% \Procedure{FastDiffusion}{$X = \{x_1,\ldots,x_N\}, n, t$}
% \State $G \gets GaussianKernel(X)$
% \State $Q \gets \text{diag}(RowSum(G))$ 
% \State $K \gets G Q^{-1} G $; $A \gets Q^{-1/2} G Q^{-1/2}$
% \State $\rho \gets \|A^t(1,\cdot)\|_2^2, \ldots, \|A^t(n,\cdot)\|_2^2$
% \State $c_1,\ldots,c_n \gets Sample(N, k=n, p=\rho/sum(\rho))$
% \State \Comment{Assign $x$ to the nearest centroid}
% \State $S_1,\ldots,S_n \gets \emptyset$
% \For{$x \in X$}
%     \State $D \gets [0, \cdots, 0]$
%     \For{$j=1,\ldots,n$}
%         \State $D(j) \gets \rho(x) + \rho(c_j) - 2A^{2t}(x,c_j)$
%     \EndFor
%     \State Let $j = \argmin(D)$; $S_j \gets S_j \cup \{x\}$
% \EndFor
% \State \Comment{Build the compressed kernel}
% \For{$i,j = 1,\ldots,n$}
%     \State $k_\mathcal{S}(S_i, S_j) \gets 0$
%     \For{$x \in S_i$, $y \in S_j$}
%         \State $k_\mathcal{S}(S_i, S_j) \gets K_\mathcal{S}(S_i, S_j) + k(x,y)$
%     \EndFor
% \EndFor
% \State $\Phi_\mathcal{S} \gets DiffusionMap(K_\mathcal{S})$
% \State \Comment{Upsample the diffusion map}
% \For{$x$ in $G$}
%     \State $\Phi(x) \gets [0, \cdots, 0]$
%     \For{$j=1,\ldots,n$}
%         \State $\Phi(x) \gets \Phi(x) + \Pr[S_j \stackrel{1 \text{step}}{\longleadsto} x] \Phi_\mathcal{S}(S_j)$
%     \EndFor
% \EndFor
% \EndProcedure
% \end{algorithmic}
% \end{algorithm}

\section{Empirical Results}
\label{sec:results}

\subsection{Swiss roll}

Here we compare three techniques for approximation of diffusion maps: linear interpolated diffusion on cluster centroids, volume-weighted Nystrom extension\cite{long2017landmark}, and our proposed compression-based fast diffusion map (CFDM). % The former two methods are described in the appendix.

The Swiss roll is a canonical test dataset for diffusion maps, and manifold learning in general. Figure~\ref{fig:swissroll_example} shows the diffusion map of the Swiss roll and each approximation, with the partition diffusion coordinates plotted in green. To quantify the reconstruction, we calculated the sum of squared error (SSE) across 32 diffusion components of 200 Swiss rolls, allowing for sign flips and reordering of components. CFDM produces a lower approximation error and a smaller approximate kernel for less computational cost than competing methods (Fig.~\ref{fig:swissroll_evaluation}). For $n\ge10^4$, CFDM offers an approximately 10x speed-up over exact diffusion maps, and this advantage grows slightly as $n$ increases.

\begin{figure*}[!tb]
    \centering
    \includegraphics[width=0.95\linewidth]{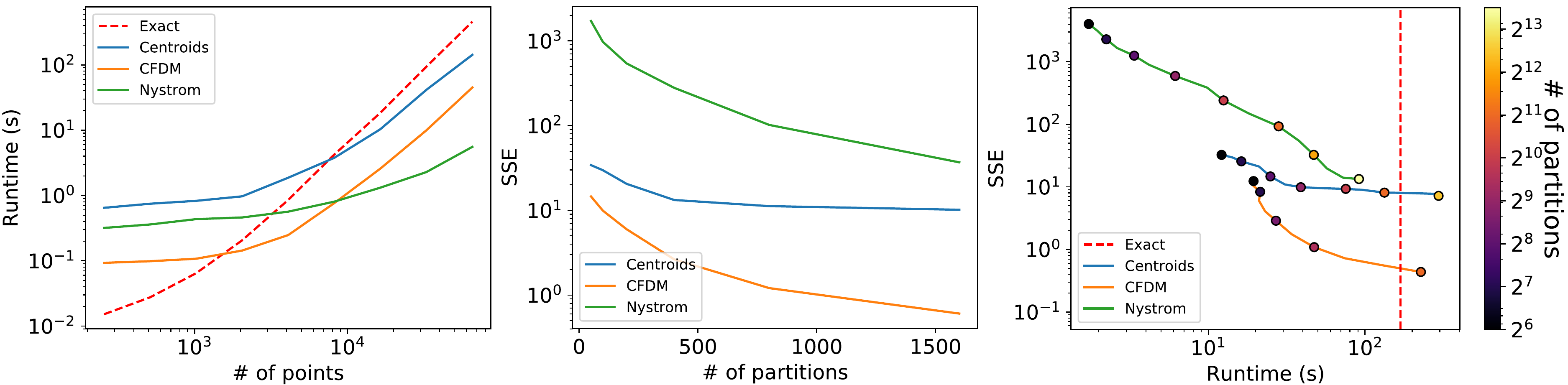}
    \caption{Quantitative evaluations. \textit{Left:} Runtime scaling for 150 partitions over an increasing number of points. \textit{Middle:} Approximation error on 32 diffusion components over an increasing number of partitions. \textit{Right:} Approximation error compared to runtime over $2^{14}$ points.}
    \label{fig:swissroll_evaluation}
    \vspace{-1em}
\end{figure*}

\subsection{Mass cytometry of induced pluripotent stem cells}

Mass cytometry is the measurement of protein abundance in individual cells via mass spectrometry. We used CFDM to approximate the diffusion map of 20,000 single cells in an induced pluripotent stem cell (iPSC) system\cite{zunder2015continuous} with 500 partitions. Figure~\ref{fig:ipsc} shows the exact and approximated diffusion maps. CFDM produces a visually equivalent embedding in a fraction of the time, accurately revealing the differentiation of skin cells into both iPSCs and a failed reprogramming state. Such an approximation will be beneficial for constructing diffusion maps on large graphs, such as social networks, as well as for downstream applications, where algorithms of high computational complexity are run on the diffusion map.

\begin{figure}[t]
    \centering
    \includegraphics[width=\linewidth]{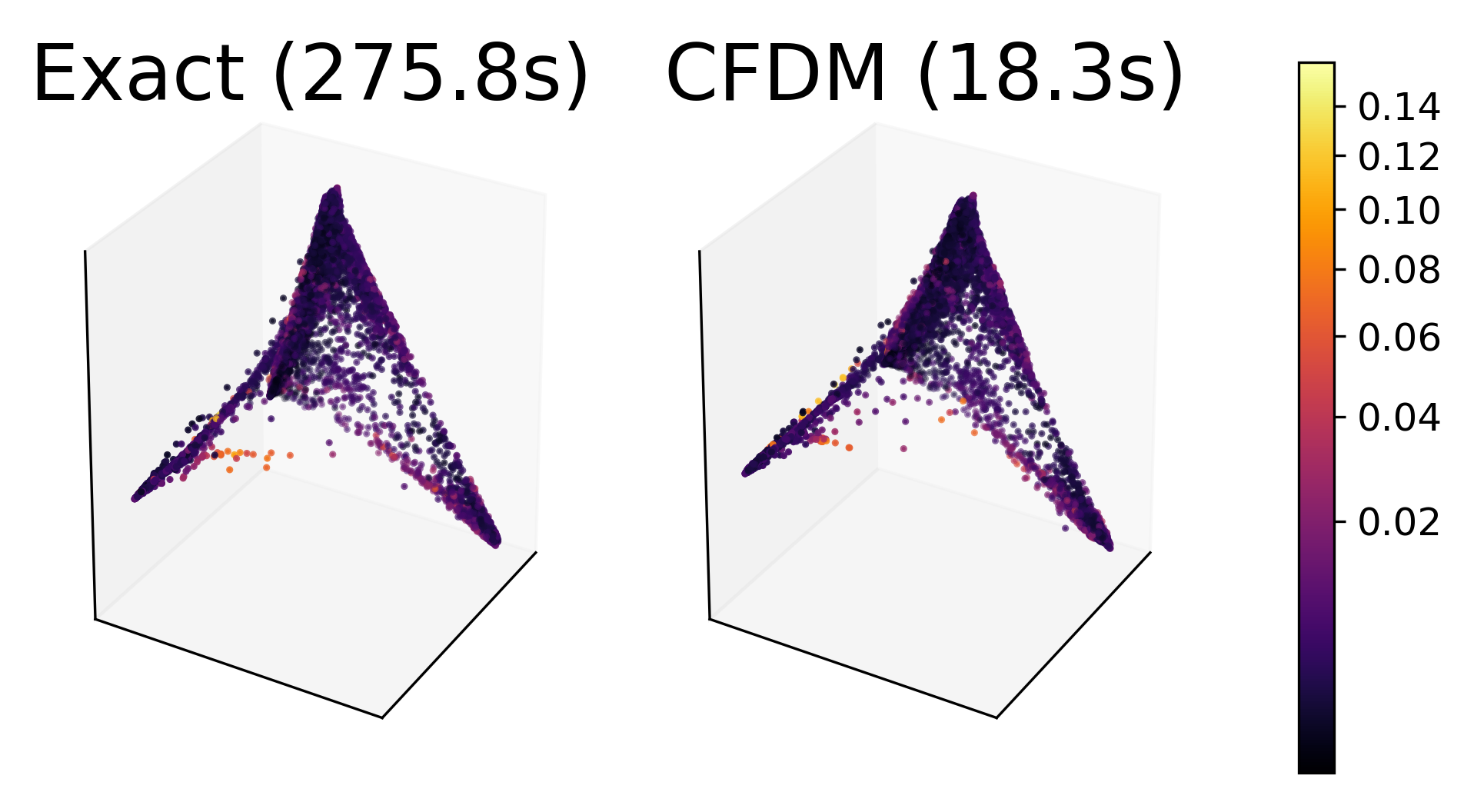}
    \caption{Exact and approximate diffusion map of iPSC dataset. Runtime for 100 components shown above. Plots are colored by SSE of approximation.}
    \label{fig:ipsc}
\end{figure}

\bibliographystyle{plain}
\bibliography{sampta,measure_kernel}

 \begin{appendices}
% \fakeappendix{apx:proof-compressed}
% \fakeappendix{apx:proof-estimator}
% \fakeequation{eq:trace-decomp}
% \fakeequation{eq:trace-orthogonality}
% \fakeappendix{apx:distances}
\onecolumn
\section{Proof of Theorem~\ref{thm:IDMGC-props}}
\label{apx:proof-IDMGC}

\noindent{}We prove the theorem in three parts, via the following lemmas. We start with the kernel degrees:

\begin{lemma}
$\Vert k(x, \cdot) \Vert_1 = \Vert g(x, \cdot) \Vert_1$
\end{lemma}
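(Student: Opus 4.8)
The plan is to compute $\|k(x,\cdot)\|_1 = \int k(x,y)\,dy$ directly by unfolding the definition of the ID-MGC kernel and exploiting a cancellation. First I would substitute the definition from Def.~\ref{def:IDMGC}, writing
$$
\|k(x,\cdot)\|_1 = \int \left( \int \frac{g(x,r)\,g(r,y)}{q(r)}\,dr \right) dy .
$$
Since the Gaussian kernel $g$ is nonnegative and $q^{-1}$ has a finite upper bound on $\mathcal{M}$ (as noted after Def.~\ref{def:IDMGC}), the integrand is nonnegative and integrable over the compact manifold, so Tonelli's theorem justifies interchanging the order of integration.

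After swapping, I would factor out the terms not depending on $y$:
$$
\|k(x,\cdot)\|_1 = \int \frac{g(x,r)}{q(r)} \left( \int g(r,y)\,dy \right) dr .
$$
The inner integral is exactly the Gaussian degree: by the symmetry $g(r,y)=g(y,r)$ and the definition $q(r)=\int g(r,y)\,dy$ given in Sec.~\ref{sec:prelim}, we have $\int g(r,y)\,dy = q(r)$. Substituting this cancels the $q(r)$ in the denominator, yielding
$$
\|k(x,\cdot)\|_1 = \int g(x,r)\,dr = q(x) = \|g(x,\cdot)\|_1 ,
$$
which is the claim.

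The argument is essentially a one-line Fubini computation, so there is no genuine obstacle. The only point requiring care is the justification of the interchange of integrals, which follows from nonnegativity of the integrand (Tonelli) together with the boundedness of $q^{-1}$ and the finite volume of $\mathcal{M}$. I would also remark that this proof uses only the symmetry of $g$ and the definition of its degree, and thus holds for any kernel function $g$, consistent with the comment in Sec.~\ref{sec:prelim} that the results do not depend on the Gaussian form of $g$.
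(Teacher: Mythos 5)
Your proof is correct and follows essentially the same route as the paper's: unfold the ID-MGC definition, swap the order of integration, and cancel $q(r)$ against the inner integral $\int g(r,y)\,dy$. Your added justification of the interchange via Tonelli (and your careful tracking of the variable in the denominator, which the paper's displayed computation garbles slightly by writing $\|g(x,\cdot)\|_1$ where $q(r)=\|g(r,\cdot)\|_1$ is meant) is a small improvement, not a different approach.
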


\begin{proof}
Direct computation yields $ \displaystyle
\Vert k(x, \cdot) \Vert_1 = \int k(x,y) dy = \iint \frac{g(x,r)}{\Vert g(x, \cdot) \Vert_1} g(r,y)dr dy
    = \int  \frac{g(x,r)}{\Vert g(x, \cdot) \Vert_1} \underbrace{\int g(r,y) dy}_{\Vert g(x, \cdot) \Vert_1} dr
    = \int  g(x,r) dr
    = \Vert g(x, \cdot) \Vert_1.$
\end{proof}

\noindent{}Next, we prove the relation to diffusion transition probabilities:
\begin{lemma}
$P^2 f(x) = \frac{1}{\Vert k(x, \cdot) \Vert_1} \int k(x,y) f(y) dy$
\end{lemma}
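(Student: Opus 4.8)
The plan is to expand $P^2$ directly as the composition of the transition operator with itself and then recognize the ID-MGC kernel inside the resulting double integral. First I would write $P^2 f(x) = P(Pf)(x) = \int p(x,r)\left(\int p(r,y) f(y)\,dy\right) dr$, and substitute the pointwise transition probabilities $p(x,r) = g(x,r)/q(x)$ and $p(r,y) = g(r,y)/q(r)$ from Sec.~\ref{sec:DM}. Pulling the $x$-dependent factor $1/q(x)$ out of the $r$-integral, the integrand becomes $\frac{1}{q(x)}\,\frac{g(x,r)\,g(r,y)}{q(r)}\,f(y)$.

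The key step is to interchange the order of the $r$- and $y$-integrations so that the $r$-integral acts only on $g(x,r)\,g(r,y)/q(r)$. After the swap, the inner $r$-integral is precisely $\int \frac{g(x,r)\,g(r,y)}{q(r)}\,dr = k(x,y)$ by Def.~\ref{def:IDMGC}, leaving $P^2 f(x) = \frac{1}{q(x)}\int k(x,y) f(y)\,dy$. Finally I would invoke the preceding kernel-degrees lemma, which gives $\| k(x,\cdot)\|_1 = \|g(x,\cdot)\|_1 = q(x)$, to replace $q(x)$ by $\| k(x,\cdot)\|_1$ and thereby recover the stated identity.

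The only technical point requiring care is the Fubini interchange of the order of integration; I expect this to be routine rather than a genuine obstacle, since $\mathcal{M}$ is compact, the Gaussian kernel $g$ is bounded and continuous, and $q^{-1}$ was assumed (in Sec.~\ref{sec:IDMGC}) to admit a finite upper bound over $\mathcal{M}$, so the double integrand is integrable and the exchange is valid.
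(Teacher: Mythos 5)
Your proposal is correct and follows essentially the same route as the paper's proof: expand $P^2$ as a composed double integral, substitute $p = g/q$, swap the order of integration to recognize $k(x,y)$, and then invoke the kernel-degrees identity $\|k(x,\cdot)\|_1 = \|g(x,\cdot)\|_1 = q(x)$ to rewrite the normalization. Your explicit justification of the Fubini interchange is a small addition the paper leaves implicit, but it does not change the argument.
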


\begin{proof}
Direct computation yields $\displaystyle
P^2 f(x) = \int p(x, r) \int p(r, y) f(y) dy dr 
    = \frac{1}{\Vert g(x, \cdot) \Vert_1} \int \frac{g(x,r)}{\Vert g(r, \cdot) \Vert_1} \int g(r,y) f(y) dy dr 
    = \frac{1}{\Vert g(x, \cdot) \Vert_1} \int \underbrace{ \int \frac{g(x,r)}{\Vert g(r, \cdot) \Vert_1} g(r,y) dr}_{k(x,y)} f(y) dy
    = \frac{1}{\Vert k(x, \cdot) \Vert_1} \int k(x,y) f(y) dy.
$
\end{proof}

\noindent{}Finally, we consider the operator norm:
\begin{lemma}
$\Vert K \Vert \le \Vert G \Vert$
\end{lemma}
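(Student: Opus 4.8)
The plan is to realize $K$ as a product of operators built from $G$ and the Gaussian degree, and then compare $K$ and $G$ in the positive-semidefinite (Loewner) order. First I would observe that, writing $Q$ for the multiplication operator $(Qf)(x) = q(x)f(x)$ and $Q^{-1}$ for multiplication by $q^{-1}$ (both bounded and positive, since $q$ is bounded away from $0$ and $\infty$ on the compact $\mathcal{M}$, as noted after Def.~\ref{def:IDMGC}), the ID-MGC kernel factors as $k(x,y) = \int g(x,r)\, q^{-1}(r)\, g(r,y)\, dr$, i.e.\ $K = G Q^{-1} G$ at the operator level.

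Next I would pass to the symmetric diffusion affinity $A = Q^{-1/2} G Q^{-1/2}$ (the self-adjoint conjugate of $P = Q^{-1}G$ underlying Def.~\ref{def:diff-affinities}), whose kernel is $a(x,y) = g(x,y)/\sqrt{q(x)q(y)}$. Substituting $G = Q^{1/2} A\, Q^{1/2}$ into $K = G Q^{-1} G$ and cancelling the commuting middle factor $Q^{1/2} Q^{-1} Q^{1/2} = I$ yields the two parallel factorizations
\begin{equation*}
G = Q^{1/2} A\, Q^{1/2}, \qquad K = Q^{1/2} A^2\, Q^{1/2}.
\end{equation*}
Both $G$ and $K$ are self-adjoint, and both are positive semidefinite because $g$ (hence its normalization $a$) is a positive-definite kernel; consequently $\norm{G} = \lambda_{\max}(G)$ and $\norm{K} = \lambda_{\max}(K)$.

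The heart of the argument is the operator inequality $A^2 \preceq A$, which follows from $0 \preceq A \preceq I$. The lower bound is positive-definiteness of $a$, while the upper bound $\norm{A} \le 1$ is exactly where the Markov structure enters: $A$ is similar to the row-stochastic operator $P = Q^{-1} G$ via $A = Q^{1/2} P Q^{-1/2}$, so $A$ and $P$ share the same spectrum, and a row-stochastic operator has spectral radius $1$; being self-adjoint, $A$ then satisfies $\norm{A} = 1$. Given $0 \preceq A \preceq I$, the commuting product $A(I-A) \succeq 0$ gives $A^2 \preceq A$, and conjugating by the self-adjoint $Q^{1/2}$ preserves the order, so $K = Q^{1/2} A^2 Q^{1/2} \preceq Q^{1/2} A\, Q^{1/2} = G$. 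Comparing the top eigenvalues of these two positive operators then delivers $\norm{K} \le \norm{G}$.

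The main obstacle I anticipate is the clean justification of $\norm{A} \le 1$: one must argue that the symmetric affinity inherits its spectral bound from the row-stochastic $P$ (rather than trying to bound $\int a(x,y)\,dy$ directly, which is not obviously $\le 1$), and one must confirm that $Q^{\pm 1/2}$ are genuine bounded, boundedly-invertible multiplication operators, so that the similarity $A = Q^{1/2} P Q^{-1/2}$ and the factorizations above are legitimate. Both points rest on the standing assumption that $q$ and $q^{-1}$ are bounded on $\mathcal{M}$; once they are in place, the remaining steps are routine operator-order manipulations.
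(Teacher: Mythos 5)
Your proof is correct, but it takes a genuinely different route from the paper's. The paper writes $K = QP^2 = GP$ (combining its degree and two-step-diffusion lemmas) and finishes by submultiplicativity, $\norm{K} = \norm{GP} \le \norm{G}\,\norm{P}$, asserting $\norm{P} = 1$ because $P$ is row-stochastic. You instead symmetrize: from $K = GQ^{-1}G$ and $A = Q^{-1/2}GQ^{-1/2}$ you get the parallel factorizations $G = Q^{1/2}AQ^{1/2}$ and $K = Q^{1/2}A^2Q^{1/2}$, and you compare in the Loewner order via $0 \preceq A \preceq I \Rightarrow A^2 \preceq A$, conjugation by $Q^{1/2}$, and the fact that the norm of a positive self-adjoint operator is the supremum of its quadratic form. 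The obstacle you flag is exactly the right one, and your detour is what resolves it: row-stochasticity gives $\norm{P}_{L^\infty} = 1$ immediately, but the $L^2$ operator norm of a non-self-adjoint Markov operator can exceed $1$, so the paper's step $\norm{P} = 1$ is the delicate point if all norms are read in $L^2$; passing to the spectral radius (similar operators share spectra, and a self-adjoint operator's norm equals its spectral radius) makes the bound $\norm{A} \le 1$ airtight. The price is that you invoke positive-definiteness of the Gaussian kernel and boundedness of $q^{\pm 1/2}$ (both available here, since $\mathcal{M}$ is compact and $q^{-1}$ is assumed bounded), neither of which the paper's one-line argument needs. Both proofs ultimately rest on the same structural fact that $K$ is a degree-weighted two-step diffusion, i.e.\ $K = QP^2$.
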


\begin{proof}
Let $Q f(x) = q(x) f(x)$; thus, we can verify $G = Q P$ and by combining the previous two lemmas we get
$\Vert K \Vert = \Vert Q P^2 \Vert = \Vert G P \Vert \le \Vert G \Vert \Vert P \Vert = \Vert G \Vert$, since $P$ is row-stochastic with $\Vert P \Vert = 1$.
\end{proof}

\section{Proof of Theorem~\ref{thm:compressed-props}}
\label{apx:proof-compressed}

\noindent{}We prove the theorem in three parts, via the following lemmas. We start with the operator norm:

\begin{lemma}
$\|K_{\mathcal{S}}\| \leq [\vol{\mathcal{M}}]^{3/2} \|G\|$
\end{lemma}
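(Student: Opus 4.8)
The plan is to realize the compressed matrix $K_{\mathcal{S}}$ as the continuous operator $K$ sandwiched between a pair of discretization maps, and then read off the bound from the operator-norm estimate $\|K\|\le\|G\|$ already established in Theorem~\ref{thm:IDMGC-props}. Concretely, I would introduce the piecewise-constant embedding $J\colon\mathbbm{R}^n\to L^2(\mathcal{M})$ sending a vector $v$ to the function $(Jv)(x)=v_j$ for $x\in S_j$, together with its adjoint $J^{*}\colon L^2(\mathcal{M})\to\mathbbm{R}^n$, which a direct computation identifies as the block-integration map $(J^{*}f)_i=\int_{S_i}f(x)\,dx$; indeed $\langle Jv,f\rangle_{L^2}=\sum_j v_j\int_{S_j}f\,dx=\langle v,J^{*}f\rangle$, so that $J^{*}$ is exactly the adjoint of $J$ for the Euclidean and $L^2$ inner products.

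The key algebraic step is the exact factorization $K_{\mathcal{S}}=J^{*}KJ$, which is immediate from Definition~\ref{def:compressed-kernel}: applying $K$ to the piecewise-constant function $Jv$ gives $(KJv)(x)=\sum_j v_j\int_{S_j}k(x,y)\,dy$, and integrating this over $S_i$ reproduces precisely $\sum_j v_j\int_{S_i}\int_{S_j}k(x,y)\,dy\,dx=(K_{\mathcal{S}}v)_i$. With the factorization in hand, submultiplicativity of the operator norm yields $\|K_{\mathcal{S}}\|\le\|J^{*}\|\,\|K\|\,\|J\|=\|J\|^{2}\|K\|$. It then remains only to estimate $\|J\|$: since $\|Jv\|_{L^2}^{2}=\sum_j v_j^{2}\,\vol(S_j)\le\bigl(\max_j\vol(S_j)\bigr)\|v\|_2^{2}$ and the $S_j$ partition $\mathcal{M}$, we obtain $\|J\|^{2}=\max_j\vol(S_j)\le\vol(\mathcal{M})$.

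Chaining these estimates with the bound $\|K\|\le\|G\|$ from Theorem~\ref{thm:IDMGC-props} gives $\|K_{\mathcal{S}}\|\le\vol(\mathcal{M})\,\|G\|$, a constant depending only on $\vol(\mathcal{M})$, exactly of the form promised in Theorem~\ref{thm:compressed-props}; the stated bound $[\vol(\mathcal{M})]^{3/2}\|G\|$ is then just a coarser form of this same volume-dependent estimate, and the precise exponent of $\vol(\mathcal{M})$ is inessential to the $O(\|G\|)$ claim. I expect the only real subtlety to be the bookkeeping of the two distinct inner products --- the Euclidean one on $\mathbbm{R}^n$ and the $L^2$ one on $\mathcal{M}$ --- because the volume factor in the final constant is precisely what mediates between the discrete and continuous settings; once $K_{\mathcal{S}}=J^{*}KJ$ is set up with consistent inner products and $\|J\|^2=\max_j\vol(S_j)$ is verified, the remainder is routine.
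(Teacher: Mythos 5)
Your proof is correct, and it shares its central construction with the paper's: both embed $v\in\mathbbm{R}^n$ into $L^2(\mathcal{M})$ as the piecewise-constant function $\sum_j v_j\chi_{S_j}$ (your $Jv$, the paper's $f_v$) and both observe that $[K_\mathcal{S}v]_i=\int_{S_i}Kf_v(x)\,dx$, so the lemma reduces to the bound $\|K\|\le\|G\|$ from Theorem~\ref{thm:IDMGC-props}. Where you genuinely differ is in how the block-integration step is controlled. You identify it as the adjoint $J^{*}$ and use $\|J^{*}\|=\|J\|$ together with the exact computation $\|J\|^2=\max_j\vol(S_j)\le\vol(\mathcal{M})$, so pure submultiplicativity of $K_\mathcal{S}=J^{*}KJ$ gives $\|K_\mathcal{S}\|\le\vol(\mathcal{M})\,\|G\|$. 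The paper instead bounds $\sum_i |\int_{S_i}Kf_v|^2$ by $\|Kf_v\|_1^2$ and then applies H\"older to pass from $L^1$ back to $L^2$, which costs an extra half-power of the volume and lands at $[\vol(\mathcal{M})]^{3/2}$. Your adjoint formulation is cleaner: it sidesteps the paper's $\ell^2$-to-$\ell^1$ comparison (whose intermediate step $\sum_i |\int_{S_i}Kf|^2\le|\int Kf|^2$ is only valid once $\int Kf$ is replaced by $\int |Kf|$, since $f_v$ need not be nonnegative) and yields a sharper constant. One small caveat: because $\vol(\mathcal{M})>[\vol(\mathcal{M})]^{3/2}$ when $\vol(\mathcal{M})<1$, your estimate does not formally imply the lemma's exact inequality in that regime; but, as you note, both arguments establish the $O(\|G\|)$ claim of Theorem~\ref{thm:compressed-props} with a constant depending only on $\vol(\mathcal{M})$, which is what the result is used for.
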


\begin{proof}
Let $v \in \mathbbm{R}^n$, and define the simple function $f_v = \sum_{j=1}^n [v]_j \chi_{S_j}$, where $\chi_{S_j}$ is the characteristic (i.e., indicator) function of $S_j$. Notice that since $\mathcal{M}$ is compact with finite volume, then $\|f\|_2 = \left[\sum_{j=1}^n [v]_j^2 \vol{S_j}\right]^{1/2} \leq [\vol{\mathcal{M}}]^{1/2} \|v\|_2$. Further, for $i=1,\ldots,n$,
$\displaystyle
    \left[ K_{\mathcal{S}} v \right]_i = \sum_{j=1}^n \left [\int_{S_i} \int_{S_j} k(x,y) dy dx \right ] [v]_j
    = \int_{S_i} \int k(x,y) f_v(y) dy dx = \int_{S_i} K f (x) dx $.
Therefore,
$\displaystyle
\|K_{\mathcal{S}} v\|_2^2 = \sum_{i=1}^n \left| \int_{S_i} K f (x) dx \right|^2 \leq 
\left| \int K f (x) dx \right|^2 = \|K f\|_1^2 \leq [\vol{\mathcal{M}}]^2 \| K f\|_2^2$
where the last inequality is due to H\"{o}lder inequality. Finally, by combining these results with Theorem~\ref{thm:IDMGC-props} we get
$
\displaystyle
\|K_{\mathcal{S}} v\|_2 \leq \vol{\mathcal{M}} \|K\| \|f\|_2 \leq [\vol{\mathcal{M}}]^{3/2} \|G\| \|v\|_2
$
for every $v \in \mathbbm{R}^n$, which yields the result in the lemma.
\end{proof}

\noindent{}Next, we consider the diffusion degrees:

\begin{lemma}
$\|k_\mathcal{S}(S,\cdot)\|_1 = \int_S \| g(x,\cdot) \|_1 dx$ for all $S \in \mathcal{S}$
\end{lemma}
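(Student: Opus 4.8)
The plan is to expand the discrete $\ell^1$ degree of the compressed kernel as a sum over the partition cells, exploit the fact that $\mathcal{S}$ tiles $\mathcal{M}$ to collapse that sum back into a single integral over the whole manifold, and then invoke the pointwise degree identity already established in Theorem~\ref{thm:IDMGC-props}. Since $k_\mathcal{S}(S,\cdot)$ is a function on the finite index set $\mathcal{S}$ (its degree is precisely the row-sum used to normalize $P_\mathcal{S}$), the first step is simply to write
$$
\|k_\mathcal{S}(S,\cdot)\|_1 = \sum_{T \in \mathcal{S}} k_\mathcal{S}(S,T) = \sum_{T \in \mathcal{S}} \int_S \int_T k(x,y)\,dy\,dx ,
$$
where the absolute values in the $\ell^1$ norm may be dropped because the Gaussian-based ID-MGC kernel $k$ is nonnegative.

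Next I would interchange the finite sum with the integrals and merge the cell-wise integrals over the $T$'s. Because the cells are pairwise disjoint and $\bigcup_{T \in \mathcal{S}} T = \mathcal{M}$, and because nonnegativity of $k$ lets Tonelli's theorem license the swap, this gives
$$
\sum_{T \in \mathcal{S}} \int_S \int_T k(x,y)\,dy\,dx = \int_S \int_{\bigcup_{T} T} k(x,y)\,dy\,dx = \int_S \int_\mathcal{M} k(x,y)\,dy\,dx .
$$
The inner integral is exactly the pointwise degree $\|k(x,\cdot)\|_1 = \int k(x,y)\,dy$, so the expression reduces to $\int_S \|k(x,\cdot)\|_1\,dx$.

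Finally I would apply part~2 of Theorem~\ref{thm:IDMGC-props}, which asserts $\|k(x,\cdot)\|_1 = \|g(x,\cdot)\|_1$ for every $x \in \mathcal{M}$, to substitute inside the integral and obtain $\int_S \|g(x,\cdot)\|_1\,dx$, as claimed. In effect the lemma is nothing more than the pointwise degree identity integrated over the cell $S$, with the partition structure handling the bookkeeping.

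There is no genuine obstacle here; the only points requiring any care are minor. One must confirm that the swap of the finite sum over cells with the Lebesgue integral is legitimate and that the disjoint union of the $T$'s really recovers all of $\mathcal{M}$, but both follow immediately from the partition hypothesis of Section~\ref{sec:compressed} together with the nonnegativity of $k$. The substance of the result is entirely inherited from Theorem~\ref{thm:IDMGC-props}.
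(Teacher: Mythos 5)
Your argument is correct and follows exactly the same route as the paper's proof: expand the degree as a sum over cells, collapse the disjoint cell integrals into a single integral over $\mathcal{M}$ to obtain $\int_S \|k(x,\cdot)\|_1\,dx$, and then invoke the degree identity $\|k(x,\cdot)\|_1 = \|g(x,\cdot)\|_1$ from Theorem~\ref{thm:IDMGC-props}. Your added remarks on nonnegativity and Tonelli are harmless elaborations of steps the paper treats as immediate.
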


\begin{proof}
Direct computation yields
$\displaystyle
\|k_\mathcal{S}(S,\cdot)\|_1 = \sum_j k_\mathcal{S}(S,S_j)
    = \int_S \left[\sum_j \int_{S_j} k(x,y) dy\right] dx = \int_S \int k(x,y) dy \, dx = \int_S \|k(x,\cdot)\|_1 dx
$,
and together with Theorem~\ref{thm:IDMGC-props} we get the result in the lemma.
%     = \iint \left[\int_S \frac{g(x,r)}{\Vert g(r,\cdot)\Vert_1}dx \right]\left[\int_Tg(r,y)dy \right] dr \, dT
%     &= \int \left[\int_S \frac{g(x,r)}{\Vert g(r,\cdot)\Vert_1}dx \right] \left[ \iint_T g(r,y)dy \, dT \right] dr \\
%     &= \int \int_S \frac{g(x,r)}{\Vert g(r,\cdot)\Vert_1} dx \underbrace{\int g(r,y)dy}_{\Vert g(r,\cdot)\Vert_1} dr \\
%     &= \iint_S g(x,r) dx\,  dr \\
%     &= \int_S \int g(x,r) dr\,  dx = \int_S \Vert g(x,\cdot)\Vert_1 dx \qedhere
% \end{align*}
\end{proof}

\noindent{}Finally, we prove the relation to diffusion transition probabilities:

\begin{lemma}
 $P_\mathcal{S}(S,T) = \int \Pr[S \stackrel{1\text{ step}}{\longleadsto} r] \Pr[r \stackrel{1\text{ step}}{\longleadsto} T] \, dr$
 where 
 $\Pr[r \stackrel{1\text{ step}}{\longleadsto} T] = \int_T p(r,y) dy$
 and 
 $\Pr[S \stackrel{1\text{ step}}{\longleadsto} r] = \int_S p(x,r) \Pr[x | S] dx$
 with prior 
 $\Pr[x | S] = \frac{\|g(x,\cdot)\|_1}{\|k_\mathcal{S}(S,\cdot)\|_1}$
\end{lemma}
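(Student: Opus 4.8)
The plan is to verify the identity by a direct computation that starts from the right-hand side and collapses it onto the normalized compressed kernel $P_\mathcal{S}(S,T) = k_\mathcal{S}(S,T)/\norm{k_\mathcal{S}(S,\cdot)}_1$. First I would substitute the definitions of the two one-step probabilities and the prior into the right-hand integral, writing it as a single triple integral over the source point $x \in S$, the target point $y \in T$, and the intermediate point $r \in \mathcal{M}$, with integrand $p(x,r)\,\Pr[x\mid S]\,p(r,y)$.

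The crucial simplification comes from the pointwise transition definition $p(x,r) = g(x,r)/q(x)$ together with $q(x) = \norm{g(x,\cdot)}_1$ (the second claim of Theorem~\ref{thm:IDMGC-props}): the factor $q(x)$ in the prior $\Pr[x\mid S] = \norm{g(x,\cdot)}_1/\norm{k_\mathcal{S}(S,\cdot)}_1$ cancels exactly against the denominator of $p(x,r)$, leaving a bare $g(x,r)$. Similarly $p(r,y) = g(r,y)/q(r)$ supplies the factor $g(r,y)/q(r)$. Pulling the constant $1/\norm{k_\mathcal{S}(S,\cdot)}_1$ outside, the integrand becomes $g(x,r)\,g(r,y)/q(r)$. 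I would then invoke Fubini to move the integration over $r$ innermost and recognize $\int g(x,r)\,g(r,y)/q(r)\,dr$ as precisely $k(x,y)$ from Definition~\ref{def:IDMGC}.

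What remains is $\frac{1}{\norm{k_\mathcal{S}(S,\cdot)}_1}\int_S\int_T k(x,y)\,dy\,dx$, which equals $k_\mathcal{S}(S,T)/\norm{k_\mathcal{S}(S,\cdot)}_1 = P_\mathcal{S}(S,T)$ by Definition~\ref{def:compressed-kernel} and the normalization defining $P_\mathcal{S}$. The only step requiring care is the Fubini interchange of the order of integration; this is justified because $g$ is bounded and integrable and, as noted in Section~\ref{sec:IDMGC}, $q^{-1}$ admits a finite upper bound on the compact manifold $\mathcal{M}$, so the triple integrand is absolutely integrable. Every other step is pure substitution and pattern-matching against the kernel definitions, so I expect the bookkeeping of the $q(x)$ cancellation in the second step to be the main thing to get right.
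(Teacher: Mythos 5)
Your proposal is correct and is essentially the paper's own proof run in reverse: the paper starts from $P_\mathcal{S}(S,T)=k_\mathcal{S}(S,T)/\|k_\mathcal{S}(S,\cdot)\|_1$, expands $k(x,y)$ via Definition~\ref{def:IDMGC}, and regroups the factors into $p(x,r)\Pr[x\mid S]$ and $p(r,y)$, which is exactly your cancellation read backwards. One tiny nit: the identity $q(x)=\|g(x,\cdot)\|_1$ is the definition of the Gaussian degree from Section~\ref{sec:prelim}, not the second claim of Theorem~\ref{thm:IDMGC-props} (which states $\|k(x,\cdot)\|_1=\|g(x,\cdot)\|_1$); the step you use is still valid.
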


\begin{proof}
Direct computation yields:
\begin{align*}
    P_\mathcal{S}(S,T) &= \frac{k_\mathcal{S}(S,T)}{\|k_\mathcal{S}(S,\cdot)\|_1}
    = \frac{1}{\Vert k_\mathcal{S}(S,\cdot)\Vert_1} \int_S \int_T k(x,y) dy\, dx
    = \frac{1}{\Vert k_\mathcal{S}(S,\cdot)\Vert_1} \iint_S \int_T g(x,r) \frac{g(r,y)}{\Vert g(r, \cdot) \Vert_1} dy\, dx\, dr \\
    &= \iint_S \int_T \frac{g(x,r)}{\Vert k_\mathcal{S}(S,\cdot)\Vert_1} \frac{g(r,y)}{\Vert g(r, \cdot) \Vert_1} dy\, dx\, dr
    = \iint_S \frac{g(x,r)}{\Vert k_\mathcal{S}(S,\cdot)\Vert_1} dx  \int_T \frac{g(r,y)}{\Vert g(r, \cdot) \Vert_1} dy\, dr \\
    &= \iint_S p(x,r) \frac{\|g(x,\cdot)\|_1}{\|k_\mathcal{S}(S,\cdot)\|_1} dx \int_T p(r,y)dy\, dr
    = \iint_S p(x,r) \Pr[x | S] dx \Pr[r \stackrel{1\text{ step}}{\longleadsto} T] dr \\
    &= \int \Pr[S \stackrel{1\text{ step}}{\longleadsto} r] \Pr[r \stackrel{1\text{ step}}{\longleadsto} T]\, dr \qedhere
\end{align*}
\end{proof}

% \section{Proof of Proposition~\ref{prop:partition-locality}}
% \label{apx:proof-locality}
% \textcolor{red}{TODO: I don't know how to do this one}

\section{Proof of Proposition~\ref{prop:estimator}}
\label{apx:proof-estimator}
Let $\Psi$ be a matrix whose columns are the eigenvectors $\psi_j$, $j=0,\ldots,|X|-1$, of the $A$ (in finite settings), and $\Lambda$ be a diagonal matrix that consists of the corresponding eigenvalues $1 = \lambda_0 \geq \cdots \lambda_{|X|-1}$. Since $A$ is symmetric, it yields an orthonormal set of eigenvectors, thus $\Psi$ is an orthogonal matrix and further, we can write $A^{t} = \Psi \Lambda^{t} \Psi^T$ for any $t \geq 0$. Let $I_\ell$ be an $N \times N$ diagonal matrix in which 
$$
[I_\ell]_{ii} = \begin{cases} 1 & i \leq \ell \\ 0 & i > \ell \, . \end{cases}
$$
Finally, since we are considering finite settings, we can enumerate that dataset as $X = \{x_1,\ldots,x_N\}$, and thus the order-$\ell$ coherence defined in Prop.~\ref{prop:estimator} as $\mu_\ell(x_i) = [\Psi I_\ell \Psi^T]_{ii}$ for $i=1,\ldots,N$. Similarly, we can also write $\rho_{t}(x_i) = \|a^t(x_i,\cdot)\|_2^2 = \langle a^t(x_i,\cdot), a^t(x_i,\cdot) \rangle = [A^{2t}]_{ii}$, $i=1,\ldots,N$. 

% Let \text{diag}$(\cdot)$ denote the diagonal elements of a matrix, and let $I^N_k$ be an $N \times N$ matrix in which the first $k$ columns are equal to the identity matrix $I(\cdot,k)$, and the last $N-k$ columns are zero vectors ($\mathbf{0}_{(N\times N-k)}$).

% \begin{lemma}
% The order-$\ell$ coherence is given by 
% $$\mu_k(x) = \sum^\ell_{j=0}|\psi_j(x)|^2 = \text{diag}(\Psi I^N_\ell \Psi^T)\delta_x$$
% \end{lemma}
% \begin{proof}
% \begin{align*}
%     \text{diag}\left(\Psi I^N_k \Psi^T\right) &= \text{diag}\left(\left[\psi_0 | \psi_1 | \ldots | \psi_k | \mathbf{0}_{(N\times N-k)} \right]\Psi^T\right)\\
%     &= \text{diag}\left(\left[\psi_0 | \psi_1 | \ldots | \psi_k \right]\left[\psi_0 | \psi_1 | \ldots | \psi_\ell\right]^T\right)
% \end{align*}

% As the outer product $\left[\psi_0 | \psi_1 | \ldots | \psi_\ell \right]\left[\psi_0 | \psi_1 | \ldots | \psi_\ell \right]^T$ can be written as $\sum_i^\ell \psi_i\psi_i^T$, we have by definition of the outer product that 
% \begin{align*}
%     \text{diag}\left(\left[\psi_0 | \psi_1 | \ldots | \psi_\ell \right]\left[\psi_0 | \psi_1 | \ldots | \psi_\ell \right]^T\right)\delta_x &= diag\left(\sum_i^\ell \psi_i\psi_i^T\right)\delta_x \\
%     &= \sum_i^\ell \psi_i(x)
% \end{align*}
% \end{proof}
% With this elementary lemma we have the corollary 
% \begin{corollary}
% The \textit{t}-step diffusion coherence is given by
% $$\rho_{t}(x) = \text{diag}\left(A^{2t}\right)\delta_x$$
% \end{corollary}
% \begin{proof}
% One proceeds as before, substituting the squared eigenvalue matrix in place of the truncated identity matrix. 
% \end{proof}

With this matrix formulation, we now show the stated result $\left|\mu_\ell(x) - \rho_t(x)\right| \propto \left| \ell - \sum_{j=0}^{|X|-1} \lambda_j^{2t}\right|$ from the proposition by direction computation, as: 
\begin{align}
\left|\sum_{i=1}^N \mu_\ell(x_i)-\rho_t(x_i)\right| &=\left| \sum_{i=1}^N [\Psi I_\ell \Psi^T]_{ii} - [A^{2t}]_{ii} \right| \nonumber \\
&= \left|\text{trace}\left(\Psi I^N_\ell \Psi^T - A^{2 t}\right) \right| = \left|\text{trace} \left[ \Psi \left(I^N_\ell - \Lambda^{2t}\right) \Psi^T \right] \right| \label{eq:trace-decomp} \\
\label{eq:trace-orthogonality} &= \left| \text{trace}\left(I^N_\ell - \Lambda^{2t}\right)\right| = \left| \ell - \sum_{j=0}^{|X|-1} \lambda_j^{2t} \right|
\end{align}
where (\ref{eq:trace-decomp}) is due to the eigendecomposition of $A^{2t}$ and (\ref{eq:trace-orthogonality}) is due to the orthogonality of $\Psi$. \qed
% The linearity of the trace allows us to consider the sum of the eigenvalues corresponding to $(\Psi I^N_\ell \Psi^T)$ and  $(A^{2t})$ separately. As the former is an outer product of $\ell$
% orthogonal vectors, its rank is $\ell$, with $\ell$ eigenvalues equal to 1 and $N-\ell$ eigenvalues equal to 0.  The eigenvalues $A^{2t}$ are clearly $\lambda_j^{2t}$.  Thus,

% \begin{align*}
% \left|\sum_{x\in X} \mu_\ell(x)-\rho_t(x)\right| &= \left|\sum_{j=0}^\ell( 1- \lambda_j^{2t}) -\sum_{j={\ell+1}}^{|X|-1}\lambda_j^{2t} \right|\\
% &= \left| \ell - \sum_{j=0}^{|X|-1} (1-\lambda_j)^{2t}\right|
% \end{align*}
% \qed

\section{Proof of Proposition~\ref{prop:distances}}
\label{apx:distances}
% Let $\Psi$ be a matrix whose columns are the eigenvectors $\psi_j$, $j=0,\ldots,|X|-1$, of the $A$ (in finite settings), and $\Lambda$ be a diagonal matrix that consists of the corresponding eigenvalues $1 = \lambda_0 \geq \cdots \lambda_{|X|-1}$. Since $A$ is symmetric, it yields an orthonormal set of eigenvectors, thus $\Psi$ is an orthogonal matrix and further, we can write $A^{t} = \Psi \Lambda^{t} \Psi^T$ for any $t \geq 0$.

We note that as shown in~\cite{coifman:DM}, the spectral theorem yields $\sum_{j=0}^\infty \lambda_j^{2t} \psi_j(x_i)\psi_j(x_j) = a^{2t}(x,y) = \langle a^t(x,\cdot),a^t(y,\cdot) \rangle$ for every $t \geq 0$. Furthermore, since by definition $\rho_t(x) = \|a^t(x,\cdot)\|_2^2$, we have 
$$
\frac{\sum_{j=0}^\infty \lambda_j^{2t} \psi_j(x_i)\psi_j(x_j)}{\sqrt{\rho_t(x)\rho_t(y)}} = \left\langle \frac{a^t(x,\cdot)}{\|a^t(x,\cdot)\|_2},\frac{a^t(y,\cdot)}{\|a^t(y,\cdot)\|_2} \right\rangle \in [0,1], \quad x,y \in X,
$$
as an inner product of normalized (i.e., unit-norm ) non-negative functions. Therefore, since $\arccos$ is monotonically decreasing in the interval $[0,1]$, we get the first duality. Namely, for each $x \in \mathcal{M}$, $t \geq 0$,
$$
\argmin_{y\in \mathcal{M}} ~ D^t_c(x,y) = \argmax_{y\in\mathcal{M}} ~ \frac{\sum_{j=0}^\infty \lambda_j^{2t} \psi_j(x_i)\psi_j(x_j)}{\sqrt{\rho_t(x)\rho_t(y)}} = \argmax_{y\in\mathcal{M}} ~ \frac{a^{2t}(x,y)}{\sqrt{\rho_t(x)\,\rho_t(y)}}\,.
$$

\noindent{}To show the second duality, we first write 
$$ \left\|\frac{a^t(x,\cdot)}{\sqrt{\rho_t(x)}} - \frac{a^t(y,\cdot)}{\sqrt{\rho_t(y)}}\right\|_2^2 = \frac{\|a^t(x,\cdot)\|_2^2}{\rho_t(x)} + \frac{\|a^t(y,\cdot)\|_2^2}{\rho_t(y)} - 2 \left\langle \frac{a^t(x,\cdot)}{{\sqrt{\rho_t(x)}}} \,,\, \frac{a^t(y,\cdot)}{{\sqrt{\rho_t(y)}}} \right\rangle = 2 - 2\frac{a^{2t}(x,y)}{\sqrt{\rho_t(x)\rho_t(y)}}.
$$
Therefore, together with the first duality, we now get
$$
\argmin_{y\in \mathcal{M}} ~ \left\|\frac{a^t(x,\cdot)}{\sqrt{\rho_t(x)}} - \frac{a^t(y,\cdot)}{\sqrt{\rho_t(y)}}\right\|_2^2 = \argmin_{y\in \mathcal{M}} ~2 - ~2\frac{a^{2t}(x,y)}{\sqrt{\rho_t(x)\rho_t(y)}} = \argmax_{y\in \mathcal{M}}~\frac{a^{2t}(x,y)}{\sqrt{\rho_t(x)\rho_t(y)}} = \argmin_{y\in \mathcal{M}} ~ D^t_c(x,y) ,
$$
which completes the proof. \qed

\end{appendices}

\end{document}